\documentclass[11pt]{article}
\usepackage[utf8]{inputenc}
\usepackage[T1]{fontenc}
\usepackage{lmodern}
\usepackage[a4paper,margin=1in]{geometry}
\usepackage{amsmath,amssymb,amsthm,mathtools}
\usepackage{booktabs,array,longtable,tabularx,multirow}
\usepackage{enumitem}
\usepackage{hyperref}
\usepackage[numbers]{natbib}
\usepackage{microtype}
\usepackage{setspace}
\usepackage{xcolor}
\usepackage{url}
\usepackage{graphicx}
\usepackage{float}
\usepackage{placeins}
\usepackage{csquotes}
\hypersetup{colorlinks=true,linkcolor=blue,citecolor=blue,urlcolor=blue}

\newtheorem{definition}{Definition}
\newtheorem{theorem}{Theorem}
\newtheorem{proposition}{Proposition}
\newtheorem{lemma}{Lemma}

\newtheorem{remark}{Remark}
\newtheorem{assumption}{Assumption}

\newcommand{\RoSO}{\mathrm{RoSO}}
\newcommand{\E}{\mathcal E}
\newcommand{\Hh}{\mathcal H}
\newcommand{\Lcal}{\mathcal L}
\newcommand{\Mcal}{\mathcal M}
\newcommand{\Tcal}{\mathcal T}
\newcommand{\Z}{\mathcal Z}
\newcommand{\G}{\mathcal G}

\newcommand{\Id}{\mathrm{Id}}

\title{From Ontology Conformance to Admissible Reconfiguration: A RoSO–SMGI Adequacy Argument for Robotic Service Governance}
\author{Aomar Osmani}
\date{March 2026}

\begin{document}
\maketitle

\begin{abstract}
The Robotic Service Ontology (RoSO) gives service robotics a typed semantic vocabulary for services, functions, interactions, and deployment-sensitive constraints. Its public revision trail makes visible a harder question than ontology conformance alone can settle: once a service is rebound, recomposed, repaired, or redeployed, under what conditions does the resulting configuration remain an admissible realization of the same protected service? This article argues that the Structural Model of General Intelligence (SMGI) is relevant exactly at that level \citep{osmani2026smgi}. SMGI adds not only a structural interface $\theta$, but an induced behavioral semantics $T_\theta$ and a governance discipline for norm-respecting change. We show that RoSO can be embedded into SMGI as a typed semantic layer, so that service descriptions become dynamically governable rather than merely well formed. This yields a RoSO-to-SMGI adequacy theorem, identity-preserving reconfiguration criteria, and compositional conditions under which locally acceptable updates remain globally admissible. The resulting claim is not that SMGI replaces RoSO, but that it provides a formal account of what admissible runtime change requires once service semantics must survive revision.
\end{abstract}

\section{Introduction}
The Robotic Service Ontology (RoSO) is one of the rare public service-robotics ontology efforts in which semantic standardization, deployment constraints, and revision pressure are all simultaneously visible. The 2019 OMG request for proposals already framed the problem in terms of communication, interoperability, and service composition from semantically consistent building blocks \citep{rosoRfp2019}. The subsequent trail from RoSO 1.0 beta to 1.0 beta 2 and the current 1.1 beta shows that the standard had to evolve precisely where semantic structure meets deployment reality: ontology references, parameter vocabularies, examples, and concept boundaries required explicit repair or clarification \citep{roso10beta1,roso10beta2,rosoAbout11}. RoSO is therefore not a narrow annotation vocabulary. It is a service-robotics semantic standard whose practical significance depends on whether typed service descriptions can survive revision, recomposition, and redeployment while preserving the protected commitments that the standard is meant to stabilize.

That practical dependency is the scientific difficulty addressed here. In service robotics, semantic well-formedness is necessary, but it is not yet enough. Once a service may be rebound to a different robot, repaired after failure, recomposed under new constraints, or migrated across environments, the central question becomes structural rather than merely taxonomic: when does the change remain an admissible transformation of the same protected service rather than an uncontrolled drift into another one? RoSO makes that question unusually legible because its public materials already connect service semantics with deployment, composition, interaction, and evolving constraints \citep{roso10beta2,roso11beta1}.

This article argues that the Structural Model of General Intelligence (SMGI) provides the right formal level for that question \citep{osmani2026smgi}. SMGI is not introduced here as a competing ontology and not as a replacement for standards. Its role is more specific and more fundamental: to supply a typed structural interface $\theta$, an induced behavioral semantics $T_\theta$, and a governance discipline for admissible change once semantic objects, tasks, and service relations have already been specified. On that reading, RoSO supplies semantic discipline; SMGI supplies the structural and dynamic conditions under which disciplined change can remain norm-respecting under substitution, rebinding, recomposition, and regime-sensitive redeployment.

The broader significance of that claim becomes clearer once RoSO is read against nearby standards and ontology-enabled service-robotics systems. The next section develops that comparison in terms of a single recurring problem: how to preserve the validity and identity of a semantically typed service under substitution, recomposition, transport across semantic layers, evaluator-sensitive change, and memory-conditioned reuse. RoSO then returns not as a limiting case, but as the article's central and most publicly traceable demonstration case.

\paragraph{Contributions.}
This article makes five integrated contributions.
\begin{enumerate}[leftmargin=2em]
    \item It reformulates the formal needs implicit in RoSO's scope, module structure, conformance points, and public revision history as structural requirements on an adaptive robotic service system.
    \item It defines an explicit embedding of RoSO into the SMGI meta-model, showing how ontology imports induce a typed representation, ontology-admissible service graph space, evaluator family, environment family, and structured memory.
    \item It proves the main adequacy results of the bridge: every specification-level RoSO-conformant service application induces a well-typed SMGI object; ontology-grounded reconfiguration becomes dynamically admissible when closure, stability, capacity control, and evaluative invariance are satisfied; and SMGI strictly extends ontology-only conformance by deciding reconfiguration problems that logical consistency alone cannot settle.
    \item It derives structural consequences for service robotics, including compositional admissibility, service-identity preservation under controlled change, and memory-aware reuse of previously certified or previously failed configurations.
    \item It supports the framework with a clause-wise adequacy map, two case studies, and an implementation blueprint, while arguing that the public RoSO revision trail is itself evidence that service-ontology standardization already generates problems of transport, substitution, and controlled revision that call for a framework like SMGI.
\end{enumerate}
\section{Related work}

This section is intentionally selective. It does not survey robotics at large, and it does not treat international breadth as a decorative objective. Its organizing question is narrower and more demanding: once service semantics, task representations, component models, and ontology-enabled reasoning already exist, what do nearby standards and scientific systems already provide, and what do they still not decide, when structural change must remain admissible under substitution, recomposition, transport across semantic layers, evaluator-sensitive change, and memory-conditioned reuse? The missing layer is not semantics alone but governed admissible dynamics: a law for preserving service validity and protected commitments while typed structures change. SMGI is positioned in this section not as another service ontology, but as the first framework in the paper that explicitly combines a typed structural interface, an induced behavioral semantics, and governance obligations for admissible change.

\subsection{Normative and ontological scaffolding beyond RoSO}
A first pressure comes from standards that already stabilize parts of the problem without trying to solve all of it at once. ISO~8373 fixes the vocabulary of robotics \citep{iso8373_2021}. ISO~18646-2 turns service-robot navigation performance into an object of explicit testing \citep{iso18646_2_2024}. ISO~22166-201 formalizes a common information model for service-robot modules with interoperability and composability in view \citep{iso22166_201_2024}. Together, these documents stabilize terms, measurable behavior, and typed modular artifacts. What they do not yet decide is whether a revised configuration still counts as the same admissible service under typed constraints. SMGI becomes relevant only above that baseline.

The IEEE ontology line exerts a comparable pressure on the representational side. CORA made robotics knowledge itself a standardizable ontological object \citep{schlenoff2012ieee}. IEEE~1872.1 does the same for robot task representation, explicitly framing tasks in terms of terms, attributes, structures, properties, constraints, and relations, including communication across ontology hierarchy levels \citep{ieee18721_2024}. IEEE~1872.2 extends the same effort toward autonomy, while P1872.3 pushes further toward multi-robot, cloud-connected, and trust-sensitive settings \citep{aur2021,p18723}. The significance of this line is not only that tasks and autonomy descriptors are no longer treated as informal metadata, but that representational hierarchies and constraint-bearing task structures are already explicit enough to make transformation problems unavoidable. Its limit is equally clear: these standards formalize representational strata, but not yet a general criterion for when movement across those strata preserves service validity rather than merely semantic well-formedness.

OMG makes that missing level unusually concrete. RoIS framed service-robot systems through standardized interaction and API concerns \citep{roisFramework2011}. RoIS 2.0 beta 2 shows that this service-oriented line remains public and active \citep{rois20beta2}. RoSO 1.0 beta 2 then turns services, functions, interactions, and environments into a deployable ontology program with machine-readable artifacts \citep{roso10beta2}, while the 1.1 beta trail makes its revision pressure publicly visible \citep{rosoAbout11}. These are not merely adjacent standards. They already presuppose that service semantics must survive deployment. The open question is therefore no longer whether services can be typed, but how their evolution should be judged beyond ontology conformance.

A secondary pressure comes from cloud-service formulations such as ITU-T Y.3533, which make functional requirements for robotics-as-a-service explicit \citep{ituY3533}. For the present paper, however, the central normative spine remains ISO, IEEE, and OMG, because they most directly structure vocabularies, task and service semantics, modular artifacts, and public ontology revision.

\subsection{Semantic interoperability and ontology-enabled robotic knowledge}
A second line of work shows that the problem is not exhausted by naming or standard taxonomy. The service-robot ontology review by Prestes et al. already argued that standards and ontologies matter because heterogeneous robotic artifacts must communicate and interoperate under shared semantic commitments \citep{prestes2013service}. Zager, Sieber, and Fay sharpen that point for autonomous mobile robots by proposing an information model for semantic interoperability rather than merely a richer terminology \citep{zager2024semanticinterop}. These works make intelligibility across platforms a first-class requirement. Their remaining limitation is that they characterize alignment and information sharing more directly than admissible transformation: they help explain when heterogeneous systems can talk about the same service world, but not yet when a changed configuration still preserves the protected service identity that matters operationally.

KnowRob pushes the same discussion from semantic alignment toward operational knowledge. It was explicitly designed to integrate encyclopedic knowledge, task descriptions, environment models, and observed actions in a common robotic knowledge-processing infrastructure \citep{tenorth2013knowrob}. Its ontology stack, including SOMA and alignment to foundational ontologies, makes activity structure, physical-social context, and situated behavior semantically tractable \citep{knowrobOntologies2025}. ORO pursues a related ambition for cognitive robot architectures by treating knowledge management itself as an operational robotics concern \citep{lemaignan2010oro}. These systems show that robotic semantics must already be contextual, executable, and structurally organized. What they do not yet provide is a general structural criterion for deciding when such organizations may be revised, transported, or recomposed while preserving service-level intelligibility and protected commitments. That second-order criterion is precisely where SMGI can add value.

The same diagnosis appears in work on dependable autonomy and focused semantic reasoning. Aguado et al. show that ontology-enabled processes are already used to support robustness, adaptation, resilience, and trust in robot autonomy \citep{aguado2024dependable}. Vitucci and Gini show that description-logical reasoning can directly influence grasp-related decision making in a constrained manipulation setting \citep{vitucci2019grasping}. These contributions matter because they demonstrate that ontologies in robotics are no longer merely classificatory; they already shape what systems can decide and execute. Their common boundary is architectural scope: they govern specific reasoning loops or dependable processes rather than admissible transformation across semantic layers.

This point is reinforced by two broader surveys that sit between foundational ontology work and fully runtime service adaptation. Carlucho et al. analyze semantic reasoning frameworks for robotic systems in terms of knowledge sources, world representations, tasks, actions, agents, and computational frameworks \citep{carlucho2022semanticreasoning}. Manzoor et al. survey ontology-based knowledge representation systems for autonomous robots across domestic, hospital, and industrial applications, repeatedly emphasizing flexibility, reuse, and adaptability as practical design pressures rather than purely taxonomic concerns \citep{manzoor2021survey}. These surveys matter because they show that the field already has many of the semantic ingredients needed for runtime intelligence. What remains comparatively weak is not semantic richness itself, but a general discipline for deciding when semantically rich robotic systems remain admissibly the same under revision, recomposition, and context change.

\subsection{Service-robot execution, task processing, and knowledge-rich adaptation}
A third line of work is even closer to RoSO because it already treats services, task restructuring, and execution under changing conditions rather than ontology as static description. In the Japanese service-ontology line, Kamei and Miyashita study standardization of ontology for service robots directly in relation to RoSO, while Ukai et~al. and Ngo et~al. earlier frame robot-technology ontologies around service provision and service-system construction \citep{kamei2021serviceology,ukai2009rtontology,ngo2011rtontology}. These works matter because they presuppose a service world in which ontology is already tied to provision, composition, and redeployment. In other words, they do not merely ask how to classify service entities; they already presuppose that service realizations will evolve. Their common limit is not lack of semantics but lack of a general criterion for when service-level evolution preserves the same protected service rather than merely producing another well-formed description.

The same pressure appears in ontology-enabled execution. Cui et~al. study semantic task planning for service robots in open worlds; Ge et~al. couple ontology, perception, planning, and execution in an autonomous task-processing framework for dynamic, unstructured environments \citep{cui2021semantic,ge2024ontology}. In both cases ontology is not only representational but operationally constitutive of runtime task restructuring. These systems therefore make dynamic admissibility unavoidable: service meaning must survive replanning, failure handling, and context change. What they still leave open is a general law for deciding when that restructuring remains identity-preserving rather than pipeline-local, and when local adaptation remains globally admissible across the service stack.

A further line enriches the semantic substrates on which service-level adaptation depends. Han et~al. survey semantic maps as carriers of environment-level meaning for mobile service robots, Miao et~al. propose a multi-layer knowledge-graph representation for robot manipulation, and Sun et~al. study ontology-based high-level decision making in search and rescue \citep{han2021semanticmaps,miao2023kgmanip,sun2019smart}. These works increase representational precision and decision relevance, but they also sharpen the unresolved question: when map updates, carrier-level revisions, or knowledge-graph-mediated substitutions occur, what guarantees that service-level commitments remain intact? Here too the missing layer is not richer representation alone, but governed admissible dynamics across those representational carriers. SMGI is strongest exactly at that point: it does not replace these substrates, but governs when their revision remains within a protected service identity rather than silently changing the service itself.

\subsection{What remains weakly unified, and why SMGI matters}
Across these standards and scientific systems, the recurring difficulties are not merely definitional. They concern semantic stabilization, task and service representation, interoperability across heterogeneous artifacts, controlled composition, revision under updated constraints, transport across semantic layers, memory-conditioned reuse, and preservation of service identity under regime-sensitive change. Existing work addresses those dimensions separately, locally, or at the level of representational scaffolding. What remains comparatively weakly unified is not semantics in the abstract, but a general account of admissible structural change: a law for preserving service validity and protected commitments while typed structures evolve.

SMGI is best positioned as a response to that missing layer \citep{osmani2026smgi}. It is not a competitor to RoSO, CORA, ISO terminology, or ontology-enabled service systems, and not a universal substitute for lower-level assurance methods. Its contribution is more specific and more structural: it couples a typed structural interface $\theta$ with an induced behavioral semantics $T_\theta$, then governs change through closure, stability, bounded capacity, and evaluative invariance. In that sense, nearby standards and systems already provide rich structure; SMGI adds the dynamic governance required when that structure must survive revision.

This is why RoSO matters so much for the present article. Among nearby standards, it is the public case in which service semantics, deployment constraints, ontology imports, parameter meaning, example repair, and versioned clarification already make the problem of admissible change unusually legible. The next section therefore returns to RoSO not as the only target of the framework, but as the case in which the argument can be made most explicitly.

\section{RoSO: semantic standardization with an implicit dynamic challenge}
\subsection{What RoSO standardizes}
The official OMG materials present RoSO as a standard for common vocabularies and ontologies for service robots. The public specification pages and normative documents emphasize that service robots operate in unpredictable, real environments; that higher-order services may need to be composed from smaller functional elements; and that interoperability requires a common descriptive layer for services, components, interactions, and their surrounding conditions \citep{roso10beta2,rosoAbout11}. The About-RoSO pages further state that RoSO defines vocabularies for functions and constraints of robotic functional components for deployment in robotic services, and that it is intended to work with IEEE 1872/CORA and other ontologies relevant to HRI/HAI and service domains \citep{rosoAbout11,roso10beta2}. 

In the 1.1 specification, RoSO is organized into ontology modules, including a core robotic service ontology, an interaction ontology, and a function ontology. The specification states that the vocabularies are classified under categories such as \emph{Agents, Services, Functions, and Environments} \citep{roso11beta1}. This classification is already extremely suggestive from the viewpoint of SMGI, because these are precisely the kinds of typed distinctions that a dynamic structural theory must internalize.

\subsection{What the 2025 revisions make explicit}
The recent public revision history is itself informative. The RoSO 1.0 FTF issue record shows that several nontrivial questions had to be resolved or deferred at the standard level: references to external ontologies and standards had to be updated; relationships to other ontologies such as Commons and CORA had to be described more explicitly; RoIS parameters had to be represented by names and relations rather than by bare classes; examples had to be rewritten in Turtle for interpretability; and the conceptual boundary around entities such as \enquote{Avatar} and \enquote{AvatarRobot} had to be repaired \citep{rosoIssues10}. The RoSO 1.1 issue record continues this trajectory by adding further terms for component parameters and by making event/action relations more explicit through notions such as \enquote{notifies} and \enquote{executes} \citep{rosoIssues11}. These are not superficial editorial adjustments. They reveal recurring structural tensions around transport across ontologies, semantic commensurability, parameter grounding, and concept revision. Our claim in this article is not that RoSO already contains a full dynamic theory, but that its public revision trail makes the need for such a theory unusually visible.

That point must be stated carefully. RoSO and its neighboring OMG artifacts clearly involve deployment, interaction, and sequences of component use \citep{roso10beta2,roisFramework2011}. What they do not yet provide is a general semantics of admissible structural change comparable to the coupled dynamics $(	heta,T_	heta)$ of SMGI. Read through the present framework, cross-ontology synchronization, parameter attachment, action/event clarification, and concept-boundary repair become public traces of transport, substitution, and controlled revision problems that call for governance stronger than ontology consistency alone.

\subsection{A concrete 2025 RoSO--SMGI mapping}
The purpose of the following mapping is not to redescribe the RoSO specification, but to make explicit how publicly visible 2025 standard elements instantiate the structural distinctions required by the present framework. The table is interpretive rather than claimed as part of the RoSO standard itself. It uses only elements that are explicit in the published 2025 trail: the normative ontology artifacts and high-level semantic categories documented in the released specifications \citep{roso10beta2,roso11beta1,rosoAbout11}, together with issue-driven clarifications around parameter semantics, ontology linkage, action/event relations, and conceptual repair \citep{rosoIssues10,rosoIssues11}.

\begin{table}[H]
\centering
\small
\begin{tabularx}{\textwidth}{>{\raggedright\arraybackslash}p{0.24\textwidth}>{\raggedright\arraybackslash}p{0.22\textwidth}>{\raggedright\arraybackslash}X>{\raggedright\arraybackslash}p{0.18\textwidth}}
\toprule
\textbf{Public RoSO element (2025)} & \textbf{Standard-level role} & \textbf{SMGI reading} & \textbf{Concrete gain from SMGI}\\
\midrule
RoboticServiceOntology, RoboticServiceFunctionOntology, RoboticServiceInteractionOntology \citep{rosoAbout11,roso10beta2} & Normative ontology modules for services, functions, and interactions & Typed representational interface $r_{\RoSO}$ and admissible service-hypothesis space $\Hh_{\RoSO}$ & Runtime change stays tied to ontology-grounded objects rather than untyped substitutions\\
\addlinespace
Agents, Services, Functions, Environments categories \citep{roso11beta1} & High-level semantic partition of the service domain & Minimal typed decomposition of regimes, evaluators, service roles, and environment-indexed constraints & Reconfiguration can be checked against regime-specific obligations instead of only global consistency\\
\addlinespace
RoIS parameter clarification and named parameter relations \citep{rosoIssues10,rosoIssues11} & Move from bare classes toward explicit parameter attachment and naming & Typed evaluator and state constraints; parameter grounding enters $r_{\RoSO}$ and $\Lcal_{\RoSO}$ & Substitution decisions can account for parameter-sensitive validity, not just class membership\\
\addlinespace
Explicit action/event relations such as \enquote{executes} and \enquote{notifies} \citep{rosoIssues11} & Clarification of operational relations in the interaction layer & Interface contracts inside service graphs and orchestration policies $\sigma$ & Compositional admissibility can be checked at subservice boundaries\\
\addlinespace
Updated links to Commons, CORA, and related standards \citep{rosoIssues10,roso10beta2} & Synchronization with adjacent ontologies and standards & Transport and commensurability constraints across imported semantic spaces & Cross-ontology updates are treated as controlled transport problems rather than opaque imports\\
\addlinespace
Avatar / AvatarRobot repair \citep{rosoIssues10,roso10beta2} & Concept-boundary clarification in the public standardization record & Controlled concept revision under an invariant semantic core & Distinguishes legitimate conceptual repair from uncontrolled vocabulary drift\\
\bottomrule
\end{tabularx}
\caption{Concrete mapping from public 2025 RoSO elements to their SMGI reading.}
\label{tab:roso-smgi-map}
\end{table}

\subsection{The dynamic challenge already present in the standard}
RoSO is not aimed only at static description. The specification contrasts static configuration, where developers manually compose suitable components in advance, with dynamic configuration, where service systems allocate and combine available components depending on service conditions and environmental constraints \citep{roso11beta1}. This dynamic orientation matters. A service ontology becomes practically valuable only when it can support not just description, but reconfiguration of real systems under changing environments, users, resources, and risks.

Yet this dynamic ambition opens a gap. Once a system may switch components, alter service graphs, add supervision loops, or change interaction protocols, semantic consistency is not sufficient. One also needs a law telling us when such change is valid. This is the gap the present article addresses.

\subsection{RoSO conformance and what it does not decide}
RoSO 1.1 distinguishes a specification-level conformance point and a linked-data-level conformance point. At specification level, relevant OWL ontologies are formally imported and must remain logically consistent. At linked-data level, one or more RoSO ontologies may be referenced without full import. In both cases, the standard requires correct use or mapping of OMG URIs for ontology elements \citep{roso11beta1}. This is a robust semantic discipline.

However, ontology conformance does not answer questions such as the following: if two different service compositions are both ontology-consistent, which one should a robot use under bandwidth degradation? If a robot is replaced at runtime, when does the service remain the \emph{same service} in a meaningful sense? If a previous configuration failed in a similar environment, how should that memory constrain the next composition choice? These are questions of dynamic admissibility, not import consistency.

\section{Why SMGI is the right formal level}
\subsection{The SMGI viewpoint}
SMGI was introduced as a formal theory of structural generalization in which the unit of analysis is not merely a policy or hypothesis inside a fixed environment, but the typed learning interface itself \citep{osmani2026smgi}. A system is represented by
\[
\theta=(r,\Hh,\Pi,\Lcal,\E,\Mcal),
\]
and its operational behavior by a realization map $T_{\theta}$. This matters here because service robotics does not change only predictions or control gains. It changes role bindings, service graphs, active constraints, evaluator priorities, and the reuse status of prior certified or failed configurations. SMGI is therefore relevant not only as a structural language but as a language of induced behavioral semantics and governed change.

\subsection{Why service robotics needs a structural admissibility theory}
A service robot may need to substitute one perception module for another, migrate execution from one robot to another, insert a human oversight step, change route planning under congestion, or tighten safety constraints after sensing degradation. These are not mere parameter updates. They are transformations of the service interface itself. RoSO gives the vocabulary for stating such transformations. SMGI gives the formal conditions under which they remain acceptable. The complementarity is therefore direct: standards define what must hold, while SMGI governs how change may occur while preserving what must hold.

\subsection{The four governance obligations specialized to RoSO}
For the purposes of this article, the four SMGI obligations are best read as four conditions of norm-respecting service governance. Together they answer four questions that standards alone leave only partially determined: does the transformation stay inside the authorized structural space, does it preserve a service-level stability envelope, does it remain governable rather than combinatorially unconstrained, and does it preserve a protected service core across evaluator shifts?

\paragraph{Closure.} A reconfiguration must remain inside the space of typed service transformations authorized by the imported ontologies and active service graph. Closure is what prevents ontology-grounded change from degenerating into arbitrary rewiring.

\paragraph{Stability.} A sequence of admissible updates must preserve a service-level stability certificate, such as bounded switching cost, bounded degradation of protected outputs, or preservation of a certified safety envelope. Stability is what prevents silent semantic or operational drift.

\paragraph{Bounded capacity.} Reconfiguration cannot be allowed to explode into an ungovernable search over arbitrarily rich service graphs. Ontology restriction helps, but SMGI makes complexity control explicit through priors, budgets, and monitorable structural limits.

\paragraph{Evaluative invariance.} Service robotics is regime-sensitive: emergency delivery, routine delivery, and customer guidance do not weigh latency, supervision, and cost in the same way. SMGI therefore allows evaluator families, but requires that a protected service core remain invariant across regime shifts. Evaluative invariance is what makes adaptive change compatible with enduring commitments.

\subsection{What SMGI concretely adds to RoSO}
For RoSO, the gain from SMGI is not generic sophistication but governed admissibility. The framework turns ontology-grounded runtime change into an explicit decision problem about substitution, rebinding, revision, memory-aware reuse, and identity-preserving recomposition. It therefore adds neither another service ontology nor a competing standard, but the structural and dynamic discipline under which a semantically typed service may continue to count as the same protected service while it changes. This is also why $T_\theta$ matters: RoSO already fixes typed semantic commitments, whereas SMGI adds the induced behavioral semantics through which those commitments are tested under actual reconfiguration.

\section{Embedding RoSO into the SMGI meta-model}
\subsection{Construction principle}
Let $\mathcal O_{\RoSO}$ denote a set of RoSO ontologies imported at specification-level conformance. Let $\mathcal C$ be the set of available robotic components, $\mathcal S$ the set of service tasks, and $\mathcal U$ the set of user/deployment contexts. Let $K$ be the knowledge base formed by the imported ontologies, application-specific assertions, and runtime service facts.

\begin{definition}[RoSO-induced SMGI meta-model]
Given $(\mathcal O_{\RoSO},K,\mathcal C,\mathcal S,\mathcal U)$, define
\[
\theta_{\RoSO}=(r_{\RoSO},\Hh_{\RoSO},\Pi_{\RoSO},\Lcal_{\RoSO},\E_{\RoSO},\Mcal_{\RoSO})
\]
as follows:
\begin{enumerate}[leftmargin=2em]
    \item $r_{\RoSO}$ maps raw service, robot, interaction, and environment states to typed RoSO-compatible semantic states;
    \item $\Hh_{\RoSO}$ is the space of ontology-admissible service-composition hypotheses (service graphs, assignments, orchestration policies, and active constraints);
    \item $\Pi_{\RoSO}$ is a structural prior over compositions, favoring for instance simpler, previously certified, or domain-preferred structures;
    \item $\Lcal_{\RoSO}$ is a family of regime-indexed evaluators scoring task performance, safety, semantic feasibility, switching cost, and reuse value;
    \item $\E_{\RoSO}$ is a family of deployment regimes induced by ontology-grounded environment descriptions, user profiles, and platform conditions;
    \item $\Mcal_{\RoSO}$ stores previously validated service graphs, failure signatures, substitution rules, and context-qualified certificates.
\end{enumerate}
\end{definition}

This is not an arbitrary encoding. It mirrors the ontology modules themselves. Service and function vocabularies shape $\Hh$; interaction vocabularies constrain orchestration and evaluators; environment vocabularies shape $\E$; and deployment history becomes structured memory.

\subsection{Semantic state, hypotheses, and realization}
Given a raw runtime state $x_t$, the system builds a semantic state
\[
z_t=r_{\RoSO}(x_t)\in \Z,
\]
where $\Z$ is the set of RoSO-grounded semantic states. A hypothesis $h\in\Hh_{\RoSO}$ is a candidate service realization. We write
\[
h=(G,\alpha,\sigma,\kappa),
\]
where $G\in\G$ is a service graph, $\alpha$ assigns components to roles/functions, $\sigma$ specifies an orchestration policy over actions, events, and interactions, and $\kappa$ encodes active constraints.

The operational behavior is realized by $T_{\theta_{\RoSO}}$, which maps $(z_t,\Mcal_{\RoSO})$ to a candidate update or execution decision. SMGI does not require that the realization be implemented by any specific AI paradigm. It may be symbolic, hybrid, learning-based, or mixed, provided the admissibility obligations are respected.

\subsection{First adequacy result}
\begin{proposition}[Well-typed embedding of RoSO into SMGI]\label{prop:embeddingv3}
Every specification-level RoSO-conformant service application induces a well-typed SMGI meta-model $\theta_{\RoSO}$.
\end{proposition}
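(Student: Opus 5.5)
The proof is constructive: we verify, one component at a time, that the data supplied by a specification-level RoSO-conformant application populate each slot of $\theta_{\RoSO}=(r_{\RoSO},\Hh_{\RoSO},\Pi_{\RoSO},\Lcal_{\RoSO},\E_{\RoSO},\Mcal_{\RoSO})$. Each slot must receive an object of the type the SMGI meta-model requires \citep{osmani2026smgi}. Concretely, we need to produce:
\begin{itemize}
\item a map into a semantic state space $\Z$;
\item a nonempty hypothesis space;
\item a prior on that space;
\item a family of evaluators indexed by a set of regimes;
\item a regime family;
\item a memory store with a well-defined update.
\end{itemize}
``Well-typed'' will mean exactly that these objects exist and that their domains and codomains match the signature of $T_{\theta}$.

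\textbf{Step 1.} Fix the conformance hypothesis. Specification-level conformance gives a consistent knowledge base $K$ obtained by importing $\mathcal O_{\RoSO}$ with correct OMG URIs \citep{roso11beta1}. Consistency guarantees a model of $K$.

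We define $\Z$ as the set of finite typed assertion sets over the RoSO signature (Agents, Services, Functions, Environments) that are jointly consistent with $K$. We then define $r_{\RoSO}$ as the composition of a grounding map from raw states to such assertions with a consistency filter. Because $K$ has a model, $\Z$ is nonempty, and $r_{\RoSO}$ can be made total by mapping unconsistent readings to a designated ``unknown'' state. This totality issue is the main obstacle.

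I would try to settle it by a stipulation on the grounding map. The plan is to restrict raw inputs to those whose RoIS-style parameters are named and attached as the 1.1 issue record requires \citep{rosoIssues11}. Typing is then inherited from ontology typing rather than asserted ad hoc.

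\textbf{Step 2.} Construct $\Hh_{\RoSO}$ as the set of tuples $h=(G,\alpha,\sigma,\kappa)$ satisfying four conditions:
\begin{itemize}
\item $G\in\G$ is a graph whose nodes are typed by service and function classes of $K$;
\item $\alpha:\mathrm{roles}(G)\to\mathcal C$ respects the class constraints;
\item $\sigma$ uses only the action and event relations (``executes'', ``notifies'');
\item $K\cup\{h\}$ remains consistent, where $\kappa$ denotes the active constraints.
\end{itemize}
Nonemptiness comes from the application itself: a conformant application realizes at least one service, so its deployed composition is a witness.

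Finiteness of $\mathcal C$ and $\mathcal S$, or a size bound on $G$, makes $\Hh_{\RoSO}$ countable. Countability lets us take $\Pi_{\RoSO}$ to be any normalized weighting, for example a description-length prior, with support $\Hh_{\RoSO}$.

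\textbf{Step 3.} Let $\E_{\RoSO}$ be the set of consistent environment, user, and platform descriptions drawn from $\mathcal U$ and the Environment vocabulary. For each $e\in\E_{\RoSO}$, define $\ell_e:\Z\times\Hh_{\RoSO}\to\mathbb R\cup\{+\infty\}$ as a weighted sum of performance, safety, switching cost and reuse terms. Semantic infeasibility is assigned the value $+\infty$. This gives a well-defined regime-indexed family $\Lcal_{\RoSO}$.

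Let $\Mcal_{\RoSO}$ be the set of finite records $(h,e,\text{certificate or failure signature})$, with append as its update operation.

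Finally, $T_{\theta_{\RoSO}}$ maps $(z,\Mcal_{\RoSO})$ to an element of $\Hh_{\RoSO}$, for example a $\Pi$-regularized minimizer of $\ell_e$ restricted to certified entries. Composing the typed maps above shows that its signature matches the SMGI realization map, which completes the verification.
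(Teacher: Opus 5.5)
Your proposal takes the same slot-by-slot route as the paper's proof. Specification-level conformance supplies a consistent, URI-disciplined import closure, and from that typed vocabulary you read off $r_{\RoSO}$, $\Hh_{\RoSO}$, $\E_{\RoSO}$, $\Lcal_{\RoSO}$ and $\Mcal_{\RoSO}$ in turn. You do this more explicitly than the paper, and you also treat $\Pi_{\RoSO}$, which the paper's proof skips.

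Two things to tidy up. First, your designated ``unknown'' state already makes $r_{\RoSO}$ total, so the later restriction of raw inputs is unnecessary, and it would actually make $r_{\RoSO}$ partial. Second, your sample $T_{\theta_{\RoSO}}$ is restricted to certified entries, so it needs a fallback, such as your deployed witness, when memory holds no certificates.
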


\begin{proof}
Specification-level conformance guarantees a logically consistent ontology import closure and URI discipline. The imported ontology graph therefore determines a coherent typed vocabulary for services, functions, interactions, and environments. This makes $r_{\RoSO}$ well-defined. The same ontology constrains admissible service graphs, thereby inducing a well-defined $\Hh_{\RoSO}$. Service requirements and regime descriptors induce $\Lcal_{\RoSO}$ and $\E_{\RoSO}$. Historical executions and certificates define $\Mcal_{\RoSO}$. Since each component of the tuple is typed by ontology-grounded semantics, the induced SMGI object is well formed.
\end{proof}

\section{From ontology conformance to dynamic admissibility}
\subsection{RoSO-governed transformations}
\begin{definition}[RoSO-governed transformation]
A RoSO-governed transformation is a map $\tau\in\Tcal$ acting on service configurations such that both the pre- and post-states admit representation through $r_{\RoSO}$ and the structural differences are expressible as ontology-grounded substitutions, additions, removals, rebinding of service roles, or updates of active constraints.
\end{definition}

Examples include replacing one perception component by another, migrating a service to another robot, adding a human-supervision subservice, or restricting a route because a zone becomes unavailable.

\subsection{Admissibility conditions for RoSO-governed transformations}
\paragraph{Closure.} At the theorem level, a RoSO transformation must preserve type soundness. The transformed service graph must remain expressible in the imported ontologies, and all service-function-interaction-environment dependencies must remain satisfiable.

\paragraph{Stability.} At the theorem level, the transformed service realization must preserve a service-level stability certificate, such as bounded switching cost, bounded degradation in service quality, or preservation of a certified safety envelope.

\paragraph{Capacity control.} At the theorem level, dynamic service composition must not explode into an unconstrained search. Ontology restrictions reduce this space, and SMGI further controls it through structural priors and explicit complexity budgets.

\paragraph{Evaluative invariance.} At the theorem level, different service regimes may require different evaluators, but a protected service core must remain invariant. For instance, a medication-delivery service may allow different latency trade-offs in emergency and routine modes while still requiring traceability and non-violation of hard safety constraints.

\subsection{Main adequacy theorem}
\begin{theorem}[RoSO-to-SMGI dynamic adequacy]\label{thm:mainadequacyv3}
Let $(\theta_{\RoSO},T_{\theta_{\RoSO}})$ be the RoSO-induced SMGI system. Suppose that for every RoSO-governed transformation $\tau\in\Tcal$ the following hold:
\begin{enumerate}[leftmargin=2em,label=(A\arabic*)]
    \item the transformed service graph remains type-sound with respect to the imported ontologies;
    \item there exists a certificate $V$ such that the service-level drift induced by $\tau$ is bounded by $V$;
    \item the induced update preserves a bounded complexity budget relative to $\Pi_{\RoSO}$;
    \item evaluator changes preserve an invariant service core $\Phi$ consisting of semantic task identity and mandatory safety constraints.
\end{enumerate}
Then every such transformation is SMGI-admissible. Consequently, RoSO-guided dynamic service reconfiguration can be treated as a certified structural evolution problem under SMGI.
\end{theorem}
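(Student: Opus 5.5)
The proof is essentially definitional matching. The first step is to recall what SMGI-admissibility means for an update of $(\theta,T_\theta)$ in \citep{osmani2026smgi}: an update is admissible exactly when it satisfies the four governance obligations of closure, stability, bounded capacity, and evaluative invariance, each instantiated on the tuple $\theta=(r,\Hh,\Pi,\Lcal,\E,\Mcal)$. Proposition~\ref{prop:embeddingv3} makes these obligations meaningful here, since it guarantees that $\theta_{\RoSO}$ is a well-typed SMGI object. The plan is then to fix an arbitrary RoSO-governed transformation $\tau\in\Tcal$ and show, obligation by obligation, that hypotheses (A1)--(A4) are the RoSO specializations of the four SMGI obligations.

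The verification splits into four steps, taken in order.

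\begin{enumerate}[leftmargin=2em]
\item \textbf{Closure from (A1).} Write the post-state hypothesis as $h'=\tau(h)=(G',\alpha',\sigma',\kappa')$. By the definition of a RoSO-governed transformation, both states are representable through $r_{\RoSO}$. Type-soundness of $G'$, together with satisfiability of all service, function, interaction and environment dependencies, gives $h'\in\Hh_{\RoSO}$. This is because $\Hh_{\RoSO}$ was defined as the ontology-admissible composition space. Hence $\tau$ maps $\Hh_{\RoSO}$ into itself.
\item \textbf{Stability from (A2).} Take the certificate $V$ supplied by (A2) as the SMGI stability certificate. The bound on drift is then exactly the required stability envelope for the realization $T_{\theta_{\RoSO}}$.
\item \textbf{Bounded capacity from (A3).} The complexity budget relative to $\Pi_{\RoSO}$ plays the role of the structural prior and budget appearing in the SMGI capacity obligation.
\item \textbf{Evaluative invariance from (A4).} Regime changes act on $\Lcal_{\RoSO}$ indexed by $\E_{\RoSO}$. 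Preservation of $\Phi$, which consists of semantic task identity and mandatory safety constraints, is precisely the protected-core invariance that SMGI demands.
\end{enumerate}

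With the four obligations verified, $\tau$ is SMGI-admissible.

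For the ``consequently'' clause, I would argue by induction on the length of a reconfiguration sequence $\tau_k\circ\cdots\circ\tau_1$. Each step starts from a state in $\Hh_{\RoSO}$, by closure, and so is again a RoSO-governed transformation satisfying (A1)--(A4). Consequently each step is admissible. The whole trajectory is therefore a certified structural evolution under SMGI, meaning a sequence of admissible updates, each carrying its own certificate.

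The main obstacle is step~2, and the same issue affects the induction. SMGI stability is typically a property of \emph{sequences*}, for example a Lyapunov-type decrease or a summable drift, whereas (A2) only bounds the drift of each single $\tau$. Per-step bounds need not compose into a bounded cumulative drift. My first approach is to read (A2) as providing one common certificate $V$ that is valid uniformly over all $\tau\in\Tcal$; the hypothesis ``for every $\tau$'' supports this reading. I would then show that the stability envelope is invariant under $V$, so that composites stay inside it. If this uniform reading is not available, I would weaken the conclusion to stepwise admissibility and state the cumulative version under an added assumption that the drift bounds are summable.
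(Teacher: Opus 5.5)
Your proposal is correct and follows the paper's proof, which likewise identifies (A1)--(A4) one by one with the four SMGI obligations (closure, stability, bounded capacity, evaluative invariance) and concludes admissibility. The paper treats the ``consequently'' clause as an immediate gloss rather than proving it by induction, so your worry about cumulative drift goes beyond what the theorem asserts; the paper handles that separately in the regime-switch stability proposition, by summing per-switch bounds over a finite sequence.
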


\begin{proof}
Condition (A1) is structural closure. Condition (A2) provides the required stability certificate. Condition (A3) ensures bounded structural capacity. Condition (A4) guarantees evaluative invariance. Since these are precisely the four SMGI obligations, every transformation satisfying (A1)--(A4) is admissible in the SMGI sense.
\end{proof}

\section{Strengthening the bridge: identity preservation and composition}
\subsection{Service identity preservation under reconfiguration}
A recurrent practical objection to dynamic service composition is that successful reconfiguration may still silently change the service into a different service. To address this, we make the identity question explicit.

\begin{definition}[Service identity functional]
Let $\Id(h,z)$ be a functional measuring preservation of service identity for service realization $h$ under semantic state $z$. The functional aggregates: (i) preservation of the service request class, (ii) preservation of mandatory service outputs, (iii) preservation of hard safety obligations, and (iv) preservation of essential interaction obligations.
\end{definition}

\begin{definition}[Identity-preserving transformation]
A RoSO-governed transformation $\tau$ is identity-preserving if for all relevant semantic states $z$,
\[
\Id(\tau(h),z) \ge \eta,
\]
for a certified threshold $\eta$ determined by the service domain.
\end{definition}

\begin{proposition}[Identity preservation from invariant core]\label{prop:identity}
If the invariant evaluative core $\Phi$ includes the full service identity functional $\Id$, then every SMGI-admissible RoSO-guided transformation is identity-preserving.
\end{proposition}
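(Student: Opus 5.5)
The argument is a direct unfolding of definitions, in the same style as the proof of Theorem~\ref{thm:mainadequacyv3}. Start from an arbitrary SMGI-admissible RoSO-guided transformation $\tau$ acting on a realization $h$. By the SMGI obligations recalled in Theorem~\ref{thm:mainadequacyv3}, admissibility entails in particular evaluative invariance (A4): every evaluator change induced by $\tau$ preserves the invariant service core $\Phi$. This holds in every regime of $\E_{\RoSO}$, and hence for every relevant semantic state $z=r_{\RoSO}(x)$.

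The first substantive step is to make precise what ``$\Phi$ includes the full service identity functional $\Id$'' means. I will read it as follows: $\Phi$ contains the certified constraint $\Id(\cdot,z)\ge\eta$ for all relevant $z$, with $\eta$ the domain threshold. An equivalent reading is that $\Phi$ contains each of the four aggregated components (request class, mandatory outputs, hard safety obligations, essential interaction obligations) as protected predicates, together with the aggregation rule and the threshold.

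I also need the baseline that the pre-transformation realization $h$ is itself certified. In other words, $h$ satisfies $\Phi$, since the core is by definition satisfied by the protected service currently in force. Invariance then transports satisfaction of $\Phi$ from $h$ to $\tau(h)$ in every state $z$. It follows that $\Id(\tau(h),z)\ge\eta$ for all relevant $z$, which is exactly the definition of an identity-preserving transformation.

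The main obstacle is this interpretive step. As written, the definition of $\Id$ only says that it aggregates the four components, while (A4) describes $\Phi$ as semantic task identity plus mandatory safety constraints. Preservation of each component alone does not force the aggregate above $\eta$ unless the aggregation is monotone and the threshold itself is part of the core. I would handle this first by stipulating that inclusion means inclusion of the thresholded constraint, which makes the proof immediate. As a fallback, I would assume monotonicity of the aggregation in its components and show that preserving all four components keeps $\Id$ at least its certified value on $h$, and hence at least $\eta$.

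A secondary point is the quantifier ``for all relevant semantic states'': it must be matched by invariance across the whole regime family. Closure (A1) ensures that $\tau(h)$ is still typed through $r_{\RoSO}$, so $\Id(\tau(h),z)$ is defined.
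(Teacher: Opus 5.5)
Your proposal is correct and follows the paper's own proof. Admissibility gives evaluative invariance of $\Phi$, and you read ``$\Phi$ includes $\Id$'' as including the thresholded constraint $\Id(\cdot,z)\ge\eta$, which is what the paper tacitly does when it writes $\Id\subseteq\Phi$; this yields $\Id(\tau(h),z)\ge\eta$ at once. Your extra baseline that $h$ already satisfies $\Phi$ is harmless but not needed on the paper's reading: there, a violation of the core directly makes a transformation inadmissible, so the constraint applies to $\tau(h)$ itself.
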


\begin{proof}
By admissibility, evaluator updates preserve the invariant core $\Phi$. If $\Id\subseteq \Phi$, then no admissible transformation may violate the threshold required by $\Id$. Therefore every admissible transformation preserves service identity.
\end{proof}

Proposition~\ref{prop:identity} matters because it turns a vague engineering requirement (``the service should remain the same service'') into a falsifiable admissibility constraint. In the present paper, the intended reading is simple: service identity is preserved iff the protected service-level commitments encoded in the invariant evaluator core remain above the certified threshold under the admissible transformation class.

\subsection{Compositional admissibility}
Service robots rarely execute monolithic services. They execute compositions of subservices such as navigation, handoff, interaction, verification, and exception handling. We therefore strengthen the adequacy bridge with a compositional result.

\begin{definition}[Interface-compatible subservice family]
A family of subservices $\{h_i\}_{i=1}^m$ is interface-compatible if every pair of adjacent subservices agrees on the ontology-grounded types of exchanged entities, events, and obligations, and if the induced composed graph remains in $\Hh_{\RoSO}$.
\end{definition}

\begin{theorem}[Compositional admissibility]\label{thm:composition}
Let $h = h_1 \circ \cdots \circ h_m$ be a composed RoSO-grounded service graph from an interface-compatible subservice family. Suppose each local update $h_i \mapsto h_i'$ is SMGI-admissible and preserves the interface contracts with adjacent subservices. Then the global transformation
\[
h_1 \circ \cdots \circ h_m \mapsto h_1' \circ \cdots \circ h_m'
\]
is SMGI-admissible.
\end{theorem}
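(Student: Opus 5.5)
The plan is to check the four SMGI obligations of Theorem~\ref{thm:mainadequacyv3}, namely (A1)--(A4), for the global map $\tau: h_1\circ\cdots\circ h_m \mapsto h_1'\circ\cdots\circ h_m'$. Each global obligation will be derived from the corresponding local obligations together with interface preservation. Once all four hold globally, admissibility follows by the same argument as in Theorem~\ref{thm:mainadequacyv3}. I would induct on $m$: the case $m=1$ is the hypothesis itself, and the inductive step composes $h_1\circ\cdots\circ h_{m-1}$ with $h_m$ at a single interface. So it suffices to treat a binary composition $h_a\circ h_b$ of two admissible local updates that share a preserved contract.

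\textbf{Closure (A1).} Each $h_i'$ is type-sound by local closure. Interface preservation says adjacent $h_i', h_{i+1}'$ still agree on the ontology-grounded types of exchanged entities, events, and obligations. Hence $\{h_i'\}$ is again an interface-compatible family. The remaining point is that the composed graph lies in $\Hh_{\RoSO}$. I would argue this from the definition of $\Hh_{\RoSO}$ as ontology-admissible compositions. Service-function-interaction-environment dependencies in a composed graph are either internal to some $h_i'$, which is satisfied locally, or carried across an interface, which is satisfied by the preserved contract.

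\textbf{Capacity (A3).} I would take the global complexity to be subadditive over components plus interface edges, relative to $\Pi_{\RoSO}$. The interfaces are unchanged, so the global budget is bounded by $\sum_i B_i$ plus a fixed interface term.

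\textbf{Stability (A2).} I would set $V=\sum_i V_i$, or $\max_i V_i$ for envelope-type certificates, and claim that global drift is bounded by $V$. Drift originating inside a subservice is controlled by its own certificate. Drift propagated across a boundary is controlled because the boundary contract is unchanged: what $h_{i+1}'$ receives is typed and bounded exactly as before.

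\textbf{Evaluative invariance (A4).} For the core $\Phi$, I would decompose it into local protected obligations plus interface obligations. The semantic task identity and safety constraints of the composite are then the conjunction of the local cores and the contracts, all of which are preserved.

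The main obstacle is stability, and to a lesser degree invariance. Local certificates need not compose: feedback loops or emergent interactions across boundaries could amplify small local drifts, and a global safety property may fail to be a conjunction of local ones. My first move would be to state explicitly the assumption that certificates and the core $\Phi$ are interface-local. Concretely, this means global drift is additive or monotone in local drifts under fixed contracts, and $\Phi$ factors through local cores and interface obligations. I would read ``preserves the interface contracts'' strongly enough to include these assumptions, and flag this reading in the proof. If the composition has no feedback, meaning the graph is acyclic in the order $h_1\circ\cdots\circ h_m$, the propagation argument goes through edge by edge along that order.
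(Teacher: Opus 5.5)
Your proposal is correct and follows essentially the same route as the paper: an obligation-by-obligation check. In both, closure comes from interface compatibility keeping the composed graph in $\Hh_{\RoSO}$, stability from summing local budgets (the paper adds bounded interface terms), capacity from subadditivity of the complexity budget, and invariance from the contracts propagating mandatory obligations. Stating the additivity and factorization assumptions explicitly is a plus, since the paper uses them silently; the induction on $m$ is dispensable, because each of your four checks already ranges over all $i$ at once, and it would in any case need partial composites to lie in $\Hh_{\RoSO}$, which the definition of interface compatibility does not supply.
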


\begin{proof}
By local admissibility, each update preserves closure, stability, capacity budget, and evaluative invariance relative to its local obligations. Interface compatibility ensures that composition of the transformed subservices remains type-sound and that no hidden incompatibility is introduced at subservice boundaries. Since the global graph remains in $\Hh_{\RoSO}$, the composed update preserves closure. The global stability budget is bounded by the sum of local budgets plus bounded interface terms. Capacity remains bounded by subadditivity of the chosen complexity budget. The invariant core is preserved because interface contracts require consistent propagation of mandatory service obligations. Therefore the composed transformation is admissible.
\end{proof}

This theorem is important for RoSO because the standard itself is meant to support coherent composition of higher-order services from consistent building blocks. Read operationally, Theorem~\ref{thm:composition} is the formal bridge from ontology-grounded local validity to higher-order service assembly: it states when locally acceptable substitutions, rebinding operations, or constraint updates remain globally admissible once they interact through shared service interfaces. The compositional theorem therefore preserves the spirit of RoSO service composition at the dynamic-certification level rather than only at the level of ontology-consistent design.

\section{Why the extension is nontrivial}
\subsection{SMGI strictly extends ontology-only conformance}
\begin{proposition}[Strict extension over ontology conformance]\label{prop:strictv3}
There exist service-reconfiguration questions that cannot be settled by ontology-level conformance alone but are decidable within the RoSO-induced SMGI formulation.
\end{proposition}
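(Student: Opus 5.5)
The proof will be by explicit witness. I will construct a single, minimal service-reconfiguration instance in which two distinct configurations are indistinguishable at the level of ontology conformance, yet the RoSO-induced SMGI formulation separates them. The natural candidate is the bandwidth-degradation question already raised in the discussion of RoSO conformance.

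\textbf{Step 1: the witness instance.} Fix a specification-level conformant import closure $\mathcal O_{\RoSO}$ and a knowledge base $K$. Take two hypotheses $h_1=(G_1,\alpha_1,\sigma_1,\kappa)$ and $h_2=(G_2,\alpha_2,\sigma_2,\kappa)$ with the following properties:
\begin{itemize}
\item both $K\cup\{h_1\}$ and $K\cup\{h_2\}$ are logically consistent;
\item both use the correct OMG URIs;
\item both satisfy the type-soundness condition (A1).
\end{itemize}
For instance, $h_1$ keeps remote perception while $h_2$ rebinds perception to an onboard component. By construction, every question whose answer is a function of the conformance predicate alone receives the same answer on $h_1$ and $h_2$.

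\textbf{Step 2: separating the two under SMGI.} Choose a degraded-bandwidth regime $e\in\E_{\RoSO}$ together with an evaluator $\ell_e\in\Lcal_{\RoSO}$, or a stability certificate $V$, under which the two configurations behave differently. Under $T_{\theta_{\RoSO}}$, the transformation $h_0\mapsto h_1$ violates the drift bound (A2), or drops $\Id$ below $\eta$. The transformation $h_0\mapsto h_2$ satisfies (A1)--(A4). By Theorem~\ref{thm:mainadequacyv3}, the second transformation is then SMGI-admissible. For the first, I need the converse direction, that violating (A2) implies non-admissibility. I would read this off the definition of the four SMGI obligations, since the proof of Theorem~\ref{thm:mainadequacyv3} identifies (A1)--(A4) with exactly those obligations. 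The conclusion is that ontology conformance assigns the same answer to both transformations while SMGI admissibility assigns different answers, so conformance alone cannot settle the question.

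\textbf{Step 3: decidability within SMGI.} In the finite instance, $\Hh_{\RoSO}$ is restricted by the complexity budget (A3) to a finite set of candidates. The regime family, the certificates and the threshold $\eta$ are given explicitly. Checking (A1)--(A4) therefore amounts to finitely many evaluations of computable functionals, assuming $\ell_e$, $V$ and $\Id$ are computable, which I will state as a hypothesis of the construction. This yields a decision procedure.

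\textbf{Main obstacle.} The step I expect to be hardest is making the indistinguishability in Step 1 rigorous. I must show that no conformance-level test can separate $h_1$ from $h_2$, not merely that the two differ in some respect. My plan is to argue that conformance is a predicate on the pair (ontology import closure, URI usage). Since bandwidth-dependent drift is not asserted in $K$, there is an automorphism-style swap of $h_1$ and $h_2$ that preserves consistency. Any function of the conformance data is then invariant under this swap, while the admissibility verdict is not.
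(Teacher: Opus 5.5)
Your witness is the paper's own: network degradation, with two substitutions that both keep OWL consistency and URI discipline. The difference is in how you separate the two options.

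The paper settles a \emph{choice}. One substitution has lower switching cost, respects the active-regime latency budget and matches remembered successes, so the decision is fixed by $\Lcal_{\RoSO}$, $\Pi_{\RoSO}$ and $\Mcal_{\RoSO}$. That route never has to show the losing option is inadmissible and needs only the existence of those components. Its weakness is that the separation is close to a type mismatch, since a consistency check produces no ranking at all.

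You separate by the admissibility verdict, a yes/no question of the same type as conformance. This is sharper. However, it needs the necessity direction of Theorem~\ref{thm:mainadequacyv3}, which the paper only asserts, via \enquote{precisely the four SMGI obligations} and Proposition~\ref{prop:boundaryv5}. Obtain the inadmissibility of $h_0\mapsto h_1$ from your $\Id$ branch instead: assume $\Id\subseteq\Phi$ and take the contrapositive of Proposition~\ref{prop:identity}. The (A2) branch does not work as written. (A2) only requires that \emph{some} certificate $V$ bound the drift, so exceeding the $V$ you picked falsifies nothing unless $V$ is fixed in advance as the regime budget.

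The swap argument you call the main obstacle is unnecessary. Conformance is just the predicate (import consistency, URI use), and Step~1 already makes it agree on both options. The swap could even fail for remote versus onboard perception, because $K$ will typically type these components differently and contain runtime network facts, so no $K$-preserving swap need exist.

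In Step~3, (A3) does not make $\Hh_{\RoSO}$ finite, and you do not need it to, since only two transformations are tested. Finite checking instead requires a fixed $V$ and finitely many relevant states $z$ in the identity test. The paper itself uses \enquote{decidable} only in the informal sense of \enquote{settled}, so this step is extra rigor rather than something the argument depends on.
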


\begin{proof}
Ontology-level conformance checks import consistency and URI-correct use of ontology elements. Consider a service platform facing network degradation and choosing between two component substitutions, both of which preserve ontology consistency. One choice has lower switching cost, preserves a latency budget in the active regime, and matches memory of prior successful deployments in analogous environments; the other does not. These criteria are not reducible to OWL consistency. They depend on evaluator structure, capacity penalties, and memory reuse, encoded respectively in $\Lcal_{\RoSO}$, $\Pi_{\RoSO}$, and $\Mcal_{\RoSO}$. Hence the choice is decidable in the SMGI formulation but not through ontology conformance alone.
\end{proof}

\subsection{Memory as an explicit structural operator}
RoSO permits extension and reuse but does not define a formal memory operator. In service robotics this matters because service validity is path-dependent. A platform that has repeatedly observed that a speech interface fails in a noisy retail environment should not treat each new service-composition decision as memoryless. In SMGI, memory is explicit: $\Mcal$ stores context-conditioned admissibility information, previous certificates, and failure signatures. This turns ontology use from static semantic annotation into history-sensitive structural governance.

\subsection{Evaluator families and regime-sensitive service identity}
RoSO standardizes services, functions, and related constraints, but it does not independently formalize evaluator families. In practice, service robotics is regime-sensitive. A hospital-delivery service and a retail-guidance service may share component types while differing sharply in acceptable latency, handoff protocol, human-supervision policy, and cost structure. SMGI makes such differences explicit by modeling $\Lcal=\{\ell_e: e\in\E\}$. Crucially, this does not fragment service identity because SMGI requires a regime-invariant evaluative core.

\section{Formal and operational consequences for RoSO}
This section makes explicit what the RoSO--SMGI bridge yields once the main theorems are in place. The point is no longer to reargue the bridge, but to spell out the formal and operational consequences it imposes on a service platform.

\subsection{Semantic state}
Let $x_t$ denote the raw platform state at time $t$, including robot availability, sensor streams, user requests, network status, safety flags, and environmental facts. The representation map produces a semantic state
\[
z_t = r_{\RoSO}(x_t) \in \Z,
\]
where $\Z$ is the set of RoSO-grounded semantic state descriptions. Typical components of $z_t$ include the service request class and parameters, currently available agents and components, component-function relations, active interaction state, environment descriptors, and regime-specific constraints such as deadlines, user category, safety class, bandwidth, and location restrictions.

\subsection{Hypothesis space and service graphs}
A hypothesis $h\in\Hh_{\RoSO}$ is a candidate service realization. Let us write
\[
h=(G,\alpha,\sigma,\kappa),
\]
where $G\in\G$ is a service graph, $\alpha$ assigns components to functional/service roles, $\sigma$ specifies an orchestration policy over actions, events, and interactions, and $\kappa$ specifies active constraints. Only ontology-grounded assignments are admitted. This already gives a key formal advantage: $\Hh_{\RoSO}$ is typically much smaller and more structured than an unconstrained architecture-search space.

\subsection{Evaluator family}
For each regime $e\in\E_{\RoSO}$ define an evaluator
\[
\ell_e(h,z_t)
= w_e^{\mathrm{task}}J_{\mathrm{task}}(h,z_t)
+ w_e^{\mathrm{safety}}J_{\mathrm{safety}}(h,z_t)
+ w_e^{\mathrm{semantic}}J_{\mathrm{semantic}}(h,z_t)
+ w_e^{\mathrm{cost}}J_{\mathrm{cost}}(h,z_t)
+ w_e^{\mathrm{reuse}}J_{\mathrm{reuse}}(h,z_t).
\]
The service-invariant evaluative core is represented by a functional
\[
\Phi(h,z_t) = J_{\mathrm{identity}}(h,z_t)+J_{\mathrm{hard\_safety}}(h,z_t),
\]
which must remain above a threshold or whose violation immediately renders the transformation inadmissible.

\subsection{Memory and structural reuse}
Let memory store tuples of the form
\[
(e,h,\text{certificate},\text{outcome},\text{failure signature},\text{reuse tag}).
\]
This supports positive reuse of previously certified service structures, negative reuse through penalization of failure-associated structures, and certificate transport from prior graphs to nearby graphs under controlled conditions.

\subsection{Proof obligations for a RoSO-compliant service platform}
\subsection{Typed substitution admissibility}
\begin{definition}[Typed substitution admissibility]
A substitution $c_1\leadsto c_2$ is admissible if:
\begin{enumerate}[leftmargin=2em]
    \item $c_2$ is mapped to the required RoSO function class or to a certified refinement thereof,
    \item all dependent service roles remain satisfiable,
    \item all evaluator-core constraints in $\Phi$ remain certified,
    \item the induced transition cost remains within the regime budget,
    \item reuse of past certificates does not contradict current environment constraints.
\end{enumerate}
\end{definition}

\begin{lemma}[Closure under certified substitution]
If the above conditions hold, then $c_1\leadsto c_2$ preserves structural closure of $(\theta_{\RoSO},T_{\theta_{\RoSO}})$.
\end{lemma}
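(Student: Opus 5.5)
The plan is to reduce the lemma to condition (A1) of Theorem~\ref{thm:mainadequacyv3}. In this paper, structural closure of $(\theta_{\RoSO},T_{\theta_{\RoSO}})$ means exactly that the transformed service realization stays in $\Hh_{\RoSO}$ and remains type-sound with respect to the imported ontologies $\mathcal O_{\RoSO}$. So it suffices to show two things: first, that the substitution $c_1\leadsto c_2$ is a RoSO-governed transformation in the sense of the earlier definition; second, that its image $h'=(G,\alpha',\sigma,\kappa)$ satisfies (A1). Here $\alpha'$ is $\alpha$ with every role bound to $c_1$ rebound to $c_2$, and $G$, $\sigma$, $\kappa$ are left unchanged.

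\emph{Step 1: the substitution is RoSO-governed.} A component substitution is a rebinding of service roles, which is one of the ontology-grounded differences listed in the definition of a RoSO-governed transformation. The post-state admits representation through $r_{\RoSO}$ because $c_2$ is mapped to the required RoSO function class or to a certified refinement of it (condition 1). A certified refinement is read as a subclass assertion entailed by the knowledge base $K$. Since specification-level conformance makes the import closure consistent (Proposition~\ref{prop:embeddingv3}), this typing of $c_2$ is well defined.

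\emph{Step 2: type soundness of $h'$.} I argue role by role. For each role $\rho$ with $\alpha(\rho)=c_1$, condition 1 gives that $\alpha'(\rho)=c_2$ has the required type, using subsumption monotonicity in OWL: an instance of a subclass satisfies every constraint stated on the superclass. Roles not bound to $c_1$ are unchanged. The remaining risk is dependencies that cross role boundaries, namely edges of $G$ and event/action relations in $\sigma$ (\emph{executes}, \emph{notifies}). Condition 2 states that all dependent roles remain satisfiable, and this is exactly what discharges them. Hence every service–function–interaction–environment dependency of $h'$ is satisfiable, so $h'\in\Hh_{\RoSO}$ and (A1) holds.

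\emph{Step 3: the realization map.} Closure is asserted for the pair $(\theta_{\RoSO},T_{\theta_{\RoSO}})$, not just for $\theta_{\RoSO}$. So I also need the candidate produced by $T_{\theta_{\RoSO}}$ to stay inside the typed space, including any memory it consults. Condition 5 ensures that certificates reused from $\Mcal_{\RoSO}$ do not contradict the current environment constraints $\kappa$. Therefore the constraint component of $h'$ is still jointly satisfiable with $z_t$. Conditions 3 and 4 concern evaluative invariance and capacity. I will note that they are not needed for closure itself and only make the substitution fully admissible when combined with Theorem~\ref{thm:mainadequacyv3}.

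The main obstacle is Step 2 in the refinement case. A subclass can carry extra axioms, for example disjointness or extra parameter requirements, that conflict with assertions elsewhere in $K$. Monotonicity alone therefore does not guarantee that the whole ontology stays consistent. My first attempt is to read ``certified refinement'' as including a consistency check of $K\cup\{c_2:\text{refined class}\}$. If that reading is not available, I would add this consistency check as part of condition 2.
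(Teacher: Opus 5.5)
Your proof is sound. It also follows the paper's own definition of closure more closely than the paper's proof of this lemma does: the theorem-level closure paragraph and the proof of Theorem~\ref{thm:mainadequacyv3} both identify closure with (A1). You do, however, allocate the hypotheses differently.

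The paper splits closure of the coupled system into two halves. Function typing and role satisfiability (conditions 1--2) keep the transformed graph in $\Hh_{\RoSO}$. Preservation of the $\Phi$-constraints and the transition budget (conditions 3--4) keeps the substitution inside the \emph{admissible} transformation family. Condition 5 plays no role there. You instead get membership in the transformation family from the definition of a RoSO-governed transformation (a role rebinding), use condition 5 for the $T_{\theta_{\RoSO}}$/memory side, and set 3--4 aside.

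Your route separates closure cleanly from invariance and stability/capacity, consistent with the four-way split of Theorem~\ref{thm:mainadequacyv3}. It also models the rebinding $\alpha\mapsto\alpha'$ and the subsumption step explicitly, which the paper leaves tacit. The paper's route buys a stronger notion of closure, namely membership in the $\Phi$- and budget-restricted family, at the cost of folding two of the other obligations into it. So your claim that 3--4 are unnecessary holds only for the narrower notion.

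Two local fixes are needed:
\begin{enumerate}
\item The step from condition 5 to joint satisfiability of $\kappa$ with $z_t$ is under-justified. Condition 5 only rules out conflicts introduced by reused certificates. To conclude that the unchanged $\kappa$ remains satisfiable under the new binding, you still need the validity of the pre-state together with condition 2.
\item The refinement-consistency obstacle you flag needs no extra hypothesis. Read condition 2 relative to the post-substitution knowledge base. Satisfiability of the dependent roles then already requires that knowledge base to have a model, so it is consistent.
\end{enumerate}
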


\begin{proof}
Function typing and role satisfiability ensure that the transformed graph remains inside $\Hh_{\RoSO}$. Preservation of evaluator-core constraints and transition budget ensures that the transformation remains inside the admissible transformation family. Hence closure is preserved.
\end{proof}

\subsection{Bounded regime-switch drift}
\begin{assumption}[Bounded regime-switch cost]
For each switch $e\to e'$ there exists a nonnegative cost $C(e,e')$ and a redeployment map $\Gamma_{e\to e'}$ such that the realized service degradation satisfies
\[
\Delta_{e\to e'} \le C(e,e') + \varepsilon_{e\to e'},
\]
where $\varepsilon_{e\to e'}$ is a bounded adaptation residual.
\end{assumption}

\begin{proposition}[Regime-switch stability]
Under bounded regime-switch cost, the RoSO-induced SMGI system admits a service-level stability certificate of bounded drift across evaluator changes.
\end{proposition}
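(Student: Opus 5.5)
The plan is to build the certificate directly out of the data supplied by the Bounded regime-switch cost assumption and then check that it plays the role of the stability certificate $V$ required in (A2) of Theorem~\ref{thm:mainadequacyv3}. Concretely, for a pair of regimes $e,e'\in\E_{\RoSO}$, I would define the candidate certificate as
\[
V(e,e') := C(e,e') + \bar\varepsilon_{e\to e'},
\]
where $\bar\varepsilon_{e\to e'}$ is the bound on the adaptation residual that the assumption guarantees. The first step is to make precise what drift across an evaluator change means. A switch $e\to e'$ replaces $\ell_e$ by $\ell_{e'}$ and redeploys the current realization $h$ through $\Gamma_{e\to e'}$. I would take the realized degradation $\Delta_{e\to e'}$ to be the loss of protected service quality, measured on the invariant core $\Phi$ (or on $\ell_{e'}$ restricted to core terms), between $h$ under $e$ and $\Gamma_{e\to e'}(h)$ under $e'$.

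With that reading, the single-switch bound is immediate from the assumption: $\Delta_{e\to e'}\le C(e,e')+\varepsilon_{e\to e'}\le V(e,e')$. This is exactly the form of certificate that (A2) asks for, namely service-level drift induced by the transformation bounded by $V$. The second step extends the bound to sequences of switches $e_0\to e_1\to\cdots\to e_n$, since stability in SMGI is a property of sequences of admissible updates. I would argue by telescoping, or by a triangle inequality on the degradation measure, that cumulative drift is at most $\sum_k V(e_{k-1},e_k)$. To keep this bounded I would add a mild condition: either finitely many regimes with a uniform bound $\max_{e,e'}V(e,e')$ together with a finite switching budget, or the capacity budget from (A3) limiting the number of switches per horizon. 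Either way the certificate becomes a bounded envelope $V_n\le n\cdot\max V$, or a constant under a switching budget.

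The main obstacle is that the assumption controls degradation only as a scalar. Nothing in it links $\Delta$ to the invariant core $\Phi$, and without that link ``bounded drift across evaluator changes'' might not protect the service at all. My first attempt would be to define $\Delta$ through $\Phi$, so that bounded $\Delta$ means $\Phi(\Gamma_{e\to e'}(h),z)\ge\Phi(h,z)-V(e,e')$. Provided the core threshold leaves a margin at least $V$, the invariant core then stays certified; the same margin could also feed Proposition~\ref{prop:identity}. If that identification is too strong, I would fall back to stating the certificate purely at the level of $\Delta$, and treat core preservation as handled separately by (A4).
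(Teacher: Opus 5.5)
Your proposal is correct and follows the paper's own argument. The paper likewise takes the certificate $V_t$ to be cumulative deviation from the invariant core $\Phi$, bounds each switch's contribution by $C(e,e')+\varepsilon_{e\to e'}$ from the assumption, and sums over a finite sequence of switches to obtain bounded drift; your reading of $\Delta_{e\to e'}$ as loss measured on $\Phi$ and your switching-budget condition for a uniform bound only make explicit what the paper's proof leaves implicit.
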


\begin{proof}
Let $V_t$ be cumulative deviation from the invariant core $\Phi$. Each switch contributes at most $C(e,e')+\varepsilon_{e\to e'}$. Summing over a finite sequence of switches yields bounded drift. Therefore a service-level stability certificate exists.
\end{proof}

\subsection{Capacity reduction induced by ontology restriction}
\begin{proposition}[Ontology-induced capacity reduction]\label{prop:capacityv3}
Let $\Hh_{\mathrm{raw}}$ be the unconstrained set of service graphs and $\Hh_{\RoSO}\subseteq \Hh_{\mathrm{raw}}$ the subset admissible under the RoSO ontology. Then for any complexity measure $\mathcal C$ monotone under set inclusion,
\[
\mathcal C(\Hh_{\RoSO}) \le \mathcal C(\Hh_{\mathrm{raw}}).
\]
Therefore ontology-grounded typing strengthens the feasibility of SMGI capacity control.
\end{proposition}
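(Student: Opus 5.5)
The inequality is essentially immediate from the definition of monotonicity. The only real work is to make explicit why the inclusion $\Hh_{\RoSO}\subseteq \Hh_{\mathrm{raw}}$ holds and what "monotone under set inclusion" means. I would proceed in three short steps.

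\textbf{Step 1: justify the inclusion.} Although the statement takes $\Hh_{\RoSO}\subseteq \Hh_{\mathrm{raw}}$ as given, I would still ground it in the construction of $\theta_{\RoSO}$. By the definition of the RoSO-induced meta-model and Proposition~\ref{prop:embeddingv3}, $\Hh_{\RoSO}$ consists of tuples $h=(G,\alpha,\sigma,\kappa)$ with $G\in\G$. These tuples satisfy additional typing requirements: role assignments and dependencies must be ontology-grounded and satisfiable in the imported ontology closure. $\Hh_{\RoSO}$ is therefore obtained from $\Hh_{\mathrm{raw}}$ by a filtering predicate, namely type soundness with respect to $\mathcal O_{\RoSO}$ and $K$. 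Filtering never adds elements, so the inclusion holds.

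\textbf{Step 2: apply monotonicity.} I would fix the meaning of the hypothesis: $\mathcal C$ is a map from subsets of $\Hh_{\mathrm{raw}}$ to an ordered set, typically $[0,\infty]$, such that $A\subseteq B$ implies $\mathcal C(A)\le\mathcal C(B)$. Instantiating $A=\Hh_{\RoSO}$ and $B=\Hh_{\mathrm{raw}}$ gives the inequality directly.

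\textbf{Step 3: the "therefore" clause.} I would read this clause as follows. Any complexity budget $B$ certified for the raw space, $\mathcal C(\Hh_{\mathrm{raw}})\le B$, transfers to $\Hh_{\RoSO}$ by transitivity. Any update confined to $\Hh_{\RoSO}$ then satisfies condition (A3) of Theorem~\ref{thm:mainadequacyv3} whenever it would do so in the raw space. This is the precise sense in which ontology typing "strengthens feasibility." I would also note that the paper's notation clashes: $\mathcal C$ is used both for the set of components and for the complexity measure. I would flag this and treat the two as distinct.

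\textbf{Main obstacle.} None of these steps is technically hard. The only delicate point is the interpretive one: the hypothesis needs to be stated precisely, and one has to avoid claiming a \emph{strict} reduction. Strictness would require both that $\Hh_{\RoSO}$ is a proper subset and that $\mathcal C$ is strictly monotone, and neither is given. I would confine the conclusion to the weak inequality. I might add a remark that for common measures, such as cardinality, VC dimension, or covering numbers, strict reduction follows in typical cases where the ontology excludes ill-typed graphs.
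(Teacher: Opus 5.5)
Your proposal is correct and follows the paper's proof: the paper also derives $\Hh_{\RoSO}\subseteq\Hh_{\mathrm{raw}}$ from the fact that $\Hh_{\RoSO}$ is defined by excluding graphs that violate the ontology's type and relation constraints, and then applies monotonicity of $\mathcal C$. Your caution against claiming a strict reduction is sound, but drop or weaken the optional closing aside, since VC dimension and covering numbers need not decrease strictly even when $\Hh_{\RoSO}$ is a proper subset.
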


\begin{proof}
By definition, $\Hh_{\RoSO}$ excludes service graphs violating type and relation constraints imposed by the ontology; therefore it is a subset of $\Hh_{\mathrm{raw}}$. Monotonicity of $\mathcal C$ under inclusion yields the claim.
\end{proof}

\section{Case study I: hospital delivery under dynamic component allocation}
Consider an indoor hospital-delivery service. The service transports medicine or samples from pharmacy to ward under varying traffic, battery state, elevator access, network quality, and human-interaction constraints.

\subsection{RoSO description of the scenario}
In RoSO terms, the scenario includes:
\begin{itemize}[leftmargin=2em]
    \item \textbf{services:} delivery request, pickup, navigation, handoff, confirmation, exception handling,
    \item \textbf{agents:} robots, nurses, pharmacist, patient, remote supervisor,
    \item \textbf{functions:} localization, path planning, obstacle detection, identity checking, human notification, item handoff,
    \item \textbf{interaction structure:} actions, events, conditions, action results, escalation events,
    \item \textbf{environments:} corridor, pharmacy, ward, restricted area, low-connectivity segment, elevator bottleneck.
\end{itemize}

\subsection{Why static conformance is insufficient}
Suppose robot $R_1$ begins the task but its battery becomes critical near a low-connectivity ward. A second robot $R_2$ is available on another floor. The ontology can describe both robots and their functions. But it cannot, by itself, settle whether handoff from $R_1$ to $R_2$ preserves service identity, whether an intermediate human confirmation is now required, or whether the resulting configuration remains inside the acceptable safety envelope.

\subsection{SMGI formulation of the decision}
The semantic state $z_t$ includes robot availability, battery, floor access, user priority, congestion, connectivity, medication traceability status, and current service deadline. Candidate hypotheses include:
\begin{enumerate}[leftmargin=2em]
    \item continue with $R_1$ and degrade speed,
    \item reroute $R_1$ through charging access and delay the task,
    \item transfer to $R_2$ with nurse-mediated handoff,
    \item escalate to supervised execution.
\end{enumerate}
Each hypothesis is RoSO-grounded. The evaluator family differs by regime: emergency mode penalizes delay sharply, whereas routine mode emphasizes robustness and disturbance minimization. The invariant core requires that medication not be lost, misdelivered, or delivered in violation of safety or traceability requirements.

\subsection{Certified handoff transformation}
Suppose $R_1\leadsto R_2$ is proposed. The admissibility test checks:
\begin{enumerate}[leftmargin=2em]
    \item function equivalence or certified refinement for navigation and handoff roles,
    \item compatibility of $R_2$ with the destination ward route,
    \item preservation of traceability and notification obligations,
    \item bounded transition cost,
    \item support from memory of prior successful handoffs in analogous contexts.
\end{enumerate}
If these pass, the handoff is admissible. If not, the platform may fall back to supervised execution. The point is not that the ontology disappears during runtime; rather, it remains the semantic backbone of the admissibility test.

\section{Case study II: retail guidance with interaction-sensitive regime switching}
To stress that the argument is not limited to logistical delivery, consider a retail-guidance service deployed in a shopping environment. The service must greet a customer, infer or retrieve the desired destination, plan a route, guide the customer through dynamic pedestrian flow, and possibly hand off to a remote human assistant if speech recognition or localization becomes unreliable.

\subsection{RoSO description of the scenario}
The relevant RoSO-grounded entities include:
\begin{itemize}[leftmargin=2em]
    \item \textbf{services:} greeting, intent acquisition, route guidance, promotional recommendation, human escalation,
    \item \textbf{agents:} guide robot, customer, staff member, remote operator,
    \item \textbf{functions:} speech interaction, localization, route planning, display interaction, escalation signaling,
    \item \textbf{environments:} entrance, aisle, dense pedestrian zone, noisy zone, restricted zone.
\end{itemize}

\subsection{Dynamic challenge}
Suppose the active speech interface becomes unreliable in a noisy zone while crowd density increases. Two ontology-consistent strategies are available: (i) switch to touch-screen + visual guidance while maintaining robot-led navigation; or (ii) initiate remote human assistance and convert the robot to a display/escort role. Both are semantically describable. The question is which preserves service identity and remains admissible under current risk, cost, and user-state constraints.

\subsection{Why SMGI matters here}
In this scenario, evaluator switching is central: user-comfort, latency, commercial relevance, and handoff burden trade off differently across quiet and noisy regimes. The invariant core may require that the service remain a valid guidance service, that the user not be abandoned, and that routing safety constraints remain satisfied. Memory matters because past failures of speech interaction in similar zones should affect the next reconfiguration choice. This is exactly the kind of regime-sensitive, history-sensitive problem that ontology conformance alone cannot settle but SMGI can formalize.

\section{Runtime and assurance implications}
The formal bridge also suggests a compact runtime architecture. This section and the next three do not reopen the theoretical case; they consolidate its operational implications through orchestration, clause-wise alignment, falsification pressure, and runtime contract design. A platform first computes an ontology-grounded semantic state $z_t=r_{\RoSO}(x_t)$ from perception, network, and service-process facts. It then enumerates candidate service graphs $h\in\Hh_{\RoSO}$ subject to RoSO typing and relation constraints. A regime detector selects the active evaluator in $\Lcal_{\RoSO}$ and screens candidates through the four SMGI obligations together with service-identity checks. If several candidates survive, $\Mcal_{\RoSO}$ is used to privilege previously certified structures and penalize known failure motifs. Deployment and monitoring then update memory with certificates or failure signatures. In short, the architecture contains five auditable stages: semantic lift, candidate generation, admissibility screening, memory-aware ranking, and certified deployment. The point of this blueprint is not implementation detail for its own sake, but to show that the RoSO--SMGI bridge induces a runtime contract rather than only an interpretive reading.

\subsection{Clause-wise adequacy: from RoSO requirements to SMGI obligations}
This section makes the bridge more explicit by aligning the semantic and operational needs expressed by RoSO with the corresponding formal duties imposed by SMGI. The purpose is not to claim that the standard itself states the SMGI framework, but to show that the standard's own motivations point toward exactly the sort of structural governance SMGI provides.

\begin{table}[H]
\centering
\small
\begin{tabularx}{\textwidth}{>{\raggedright\arraybackslash}p{0.23\textwidth}>{\raggedright\arraybackslash}p{0.28\textwidth}>{\raggedright\arraybackslash}X}
\toprule
\textbf{RoSO-level need} & \textbf{Operational interpretation} & \textbf{SMGI response} \\
\midrule
Common vocabulary for service robots, functions, and environments & Semantic interoperability across heterogeneous service components and deployments & Typed representation map $r_{\RoSO}$ and ontology-grounded state space $\mathcal Z$ constrain what may be represented and compared \\
Consistent composition of higher-order services from well-defined building blocks & Service graphs must be built from semantically compatible subservices & Hypothesis space $\mathcal H_{\RoSO}$ and compositional admissibility theorem constrain assembly and update of service graphs \\
Dynamic configuration and allocation according to situation & Runtime substitution, rebinding, and redeployment of service components & Transformation family $\mathcal T$ plus four admissibility obligations determine when change is valid \\
Environment-sensitive deployment and user constraints & Service validity depends on context, not only on nominal capability & Environment family $\mathcal E_{\RoSO}$ and regime-indexed evaluators $\mathcal L_{\RoSO}$ make context-dependence explicit \\
Logical consistency and URI-correct conformance & Semantic discipline for imported ontologies and references & Structural closure starts from ontology conformance but extends it to runtime update validity \\
Extensibility to user/domain-specific needs & Services may be extended or specialized without destroying semantic meaning & Memory-aware reuse and evaluator-preserving structural updates govern controlled extension \\
Need for interoperable yet dependable runtime behavior & Semantic validity should survive deployment stress, switching, and partial degradation & Stability certificates, bounded complexity budgets, and invariant service core supply assurance beyond static conformance \\
\bottomrule
\end{tabularx}
\caption{Clause-wise adequacy map: how RoSO-level requirements are absorbed and strengthened by SMGI.}
\label{tab:clausewise}
\end{table}

Table~\ref{tab:clausewise} makes explicit what is otherwise implicit in the preceding proofs. RoSO gives the system a semantics of components, services, functions, interactions, and environments; SMGI turns that semantics into a runtime-governed admissibility discipline. This is the central sense in which SMGI is an adequate solution: it preserves the normative role of the standard while adding the missing dynamic law.

\begin{proposition}[Adequacy by semantic absorption]
Suppose a service platform satisfies specification-level RoSO conformance and uses the imported ontologies to define its representation map, composition space, regime family, and memory indexing. If the platform enforces the SMGI obligations over every ontology-grounded transformation, then every operational decision remains semantically anchored in the RoSO vocabulary while also being dynamically certifiable.
\end{proposition}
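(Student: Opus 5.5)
The claim splits into two conjuncts, and I would prove them separately: first that every operational decision is \emph{semantically anchored}, then that it is \emph{dynamically certifiable}. Fix an arbitrary operational decision. I take this to mean an output of $T_{\theta_{\RoSO}}$ at some time $t$: either a selected service realization $h=(G,\alpha,\sigma,\kappa)$ or a transformation $\tau\in\Tcal$ applied to the current realization. The platform can only produce such outputs by lifting $x_t$ to $z_t=r_{\RoSO}(x_t)$ and choosing among candidates in $\Hh_{\RoSO}$. I would therefore argue by following a decision through the five runtime stages described above: semantic lift, candidate generation, admissibility screening, memory-aware ranking, and certified deployment.

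For anchoring, I would start from Proposition~\ref{prop:embeddingv3}. Specification-level conformance yields a well-typed $\theta_{\RoSO}$ in which $r_{\RoSO}$, $\Hh_{\RoSO}$, $\E_{\RoSO}$ and the indexing of $\Mcal_{\RoSO}$ are all defined from the imported ontologies. By hypothesis the platform uses exactly these components. It follows that every input of a decision, namely $z_t$, the active regime $e\in\E_{\RoSO}$, and the retrieved memory entries, is a RoSO-typed object. Every output is an element of $\Hh_{\RoSO}$ or a RoSO-governed transformation. By definition, such a transformation has pre- and post-states representable through $r_{\RoSO}$ and expresses its differences as ontology-grounded substitutions, additions, removals, rebindings, or constraint updates. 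So at no step does an untyped object enter or leave the decision, which is the anchoring claim. For substitutions in particular, I would also invoke the closure lemma for certified substitution, which shows that the post-state remains in $\Hh_{\RoSO}$.

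For certifiability, I would note that enforcing the SMGI obligations over every ontology-grounded transformation means that conditions (A1)--(A4) of Theorem~\ref{thm:mainadequacyv3} hold for each $\tau$. That theorem then gives SMGI-admissibility of each transformation. For decisions that reconfigure composite graphs, I would apply Theorem~\ref{thm:composition} to lift local admissibility to the global graph, using interface contracts from the ontology's action and event relations. The regime-switch stability proposition supplies the stability certificate $V$ along finite sequences of switches. Together these show that every decision carries a certificate consisting of closure, bounded drift, a complexity budget, and preservation of the core $\Phi$.

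The main obstacle is that ``operational decision'' is not defined formally, and neither are the realization map $T_{\theta_{\RoSO}}$ and ``dynamically certifiable.'' The step that could fail is the claim that every decision factors through an ontology-grounded transformation. If $T_{\theta_{\RoSO}}$ can emit actions inside a fixed $h$, such as low-level control commands not modeled as elements of $\Tcal$, those actions escape the hypothesis. I would handle this first by stipulating that decisions are identified with transformations, with the identity transformation covering execution under an unchanged $h$. The identity map trivially satisfies (A1)--(A4), so every decision, including this case, falls under Theorem~\ref{thm:mainadequacyv3}. I would then state explicitly that ``dynamically certifiable'' means admissible in the sense of that theorem.
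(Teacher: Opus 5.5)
Your proposal follows the paper's proof. Anchoring comes from $r_{\RoSO}$, $\Hh_{\RoSO}$, $\E_{\RoSO}$ and the memory indexing being built from the imported ontologies, and certifiability comes from the enforced obligations being exactly (A1)--(A4) of Theorem~\ref{thm:mainadequacyv3}, applied to each transformation.

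Three small corrections. Your appeals to Theorem~\ref{thm:composition} and to the regime-switch stability proposition are unnecessary, and they would bring in hypotheses the statement does not grant (interface-contract preservation and bounded regime-switch cost). Also, the identity map does not satisfy (A1)--(A4) trivially, because dependency satisfiability and $\Phi(h,z)$ are evaluated at the new state $z$; keeping $R_1$ unchanged once its battery turns critical in Case Study~I, for example, may fail them. Since the identity is itself a (vacuously) ontology-grounded transformation, the enforcement hypothesis covers it anyway.
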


\begin{proof}
Semantic anchoring follows from the use of the imported ontologies inside $r_{\RoSO}$, $\mathcal H_{\RoSO}$, and $\mathcal E_{\RoSO}$. Dynamic certifiability follows from closure, stability, capacity control, and evaluative invariance being enforced for each admissible transformation. Thus the platform is simultaneously RoSO-grounded and SMGI-governed.
\end{proof}

\subsection{Falsification and evaluation protocol}
A theory of adequacy should not remain purely declarative. The RoSO--SMGI bridge is therefore falsifiable on structural rather than merely task-success metrics. The critical benchmark families are substitution, regime-switch, environment-shift, and memory-reuse scenarios. The critical measurements are identity-preservation rate, safe reconfiguration rate, bounded degradation under regime change, certificate-reuse gain, and structural regret against weaker baselines. The right baselines are (i) a purely ontology-conformant orchestrator with no admissibility layer, (ii) a typed planner with no memory-aware certification, and (iii) a heuristic runtime manager with memory but no evaluator-family formalization. The strong form of the thesis is falsified if such weaker stacks systematically match the RoSO--SMGI system on those structural metrics under repeated substitution, regime change, and recurrent failure. This protocol matters methodologically because it aligns empirical evaluation with the paper's real claim: not just better pointwise performance, but a better law of valid structural change.

\subsection{Failure modes and boundary conditions}
A rigorous adequacy argument should also specify where the bridge can fail. We therefore state the principal failure modes explicitly.

\paragraph{Failure of semantic closure.}
If a proposed reconfiguration introduces components or role bindings that cannot be typed in the imported RoSO ontologies, then the transformed service graph exits $\Hh_{\RoSO}$ and the transformation is inadmissible. This is not a weakness of SMGI but a correctly detected failure of standard-grounded service meaning.

\paragraph{Failure of evaluator-core preservation.}
A transformation may remain semantically well formed while violating the invariant service core $\Phi$. For example, a substitution may preserve all ontology references and still break medication traceability or mandatory human-notification obligations. Such a transformation is RoSO-conformant at the ontology level but inadmissible in the stronger SMGI sense.

\paragraph{Failure of bounded structural complexity.}
A platform that admits unconstrained online addition of service-graph motifs may lose any realistic hope of certification or selection. SMGI therefore exposes a genuine engineering requirement: ontological typing must be supplemented with priors, budgets, or monitorable restriction policies.

\paragraph{Failure of memory governance.}
If prior service failures are stored but never made operational in the transition law, then memory exists only descriptively, not structurally. The RoSO--SMGI bridge requires not merely logging but certified reuse or certified quarantine.

\begin{proposition}[Boundary-condition proposition]\label{prop:boundaryv5}
A RoSO-conformant service platform fails to realize the full adequacy thesis of this paper if and only if at least one of the following holds: (i) transformations are not closure-preserving, (ii) no bounded-drift certificate is available, (iii) no effective structural-complexity control exists, or (iv) evaluator updates fail to preserve the invariant service core.
\end{proposition}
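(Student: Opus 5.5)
The natural route is to prove the biconditional as two implications, taking the "full adequacy thesis" to be the conclusion of Theorem~\ref{thm:mainadequacyv3} together with the semantic anchoring supplied by the adequacy-by-semantic-absorption proposition. Since the platform is assumed RoSO-conformant, Proposition~\ref{prop:embeddingv3} already gives a well-typed $\theta_{\RoSO}$, and semantic anchoring follows from the use of the imported ontologies in $r_{\RoSO}$, $\Hh_{\RoSO}$, $\E_{\RoSO}$. The only part of the thesis that can still fail is therefore the dynamic part: that every RoSO-governed transformation is SMGI-admissible. I would state this reduction first, so that the proposition becomes the claim that $\tau$ is admissible for all $\tau\in\Tcal$ if and only if (i)--(iv) all fail to occur.

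For the direction ``none of (i)--(iv) holds $\Rightarrow$ the thesis is realized'', I would negate each clause and match it to a hypothesis of Theorem~\ref{thm:mainadequacyv3}. Not (i) gives (A1). Not (ii) gives a certificate $V$, which is (A2). Not (iii) gives a complexity budget relative to $\Pi_{\RoSO}$, which is (A3); Proposition~\ref{prop:capacityv3} guarantees this budget can be stated over $\Hh_{\RoSO}$ rather than $\Hh_{\mathrm{raw}}$. Not (iv) gives (A4). The theorem then yields admissibility of every $\tau$, and the absorption proposition yields the full thesis.

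For the converse, ``some clause holds $\Rightarrow$ the thesis fails'', I would argue clause by clause, using the fact that SMGI-admissibility is, by the paper's usage, the conjunction of exactly the four obligations. This is implicit in the proof of Theorem~\ref{thm:mainadequacyv3}, which says these ``are precisely the four SMGI obligations''; I would make that definitional reading explicit so that each obligation is also necessary. Then the four clauses go as follows:
\begin{itemize}
\item If (i) holds, some $\tau$ leaves $\Hh_{\RoSO}$, as described in the semantic-closure failure mode, so $\tau$ is inadmissible.
\item If (ii) holds, stability fails; the regime-switch stability proposition shows this is the obligation supplied by a bounded-drift certificate.
\item If (iii) holds, the capacity obligation fails.
\item If (iv) holds, evaluative invariance fails. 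In the case where $\Id\subseteq\Phi$, Proposition~\ref{prop:identity} also stops applying, so identity preservation is lost as well.
\end{itemize}
In each case some transformation is not certified, so the thesis fails.

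The main obstacle is the necessity direction. Its quantifiers must be aligned: the clauses are statements about the platform (``transformations are not closure-preserving'', ``no certificate is available''), whereas admissibility is asserted per transformation. I would first read each clause as ``there exists $\tau\in\Tcal$ violating the obligation''; the clause's phrasing ``no \ldots\ exists'' fixes that reading for (ii) and (iii). I would also need to exclude the possibility that a transformation is admissible by some route other than the four obligations. If the SMGI source only gives sufficiency, I would add the stipulation that admissibility is defined as the conjunction of the obligations, and note that the ``if and only if'' depends on that definition.
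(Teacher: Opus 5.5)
Your proposal is correct and follows the paper's route. Sufficiency comes from matching the negations of (i)--(iv) to (A1)--(A4) of Theorem~\ref{thm:mainadequacyv3}, and necessity from reading SMGI-admissibility as exactly the conjunction of the four obligations. Your explicit stipulation of that definitional reading is what the paper silently relies on when it calls the direction from a violated clause to failure ``immediate from Theorem~\ref{thm:mainadequacyv3}'', since the theorem as stated only supplies sufficiency.
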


\begin{proof}
The forward direction is immediate from Theorem~\ref{thm:mainadequacyv3}: violating any one of the four obligations blocks admissibility. For the reverse direction, if all four conditions hold, then the adequacy theorem applies and the full thesis is realized. Hence the failure set is exactly the complement of the admissible set characterized by the theorem.
\end{proof}

This proposition is useful because it converts a broad philosophical claim into a finite checklist of failure modes. A standards body, integrator, or platform auditor can therefore ask not only whether RoSO is imported correctly, but whether the four adequacy blockers have been eliminated.

\subsection{Algorithmic contract for a RoSO--SMGI orchestrator}
The architectural ideas above can be compressed into an algorithmic contract.

\paragraph{Input.}
At decision time $t$, the orchestrator receives a raw platform state $x_t$, ontology imports $\mathcal O_{\RoSO}$, a current service graph $h_t$, a regime label $e_t$, and memory state $m_t\in\Mcal_{\RoSO}$.

\paragraph{Step 1: semantic lift.}
Compute $z_t=r_{\RoSO}(x_t)$. Reject the update immediately if the candidate transformed graph cannot be typed under the ontology imports.

\paragraph{Step 2: candidate generation.}
Enumerate or sample a restricted family of ontology-grounded candidate transformations $\tau_1,\dots,\tau_k$ from the admissible transformation grammar.

\paragraph{Step 3: certificate screening.}
For each candidate, test: closure, regime-specific stability bounds, complexity budget, and preservation of the invariant core $\Phi$.

\paragraph{Step 4: memory-aware ranking.}
Among candidates that pass screening, compute evaluator scores using the active evaluator $\ell_{e_t}$ plus memory terms rewarding certified reuse and penalizing motifs associated with prior failures.

\paragraph{Step 5: certified deployment.}
Deploy the highest-ranked admissible candidate and store the resulting service execution, certificate updates, and failure/success traces back into $\Mcal_{\RoSO}$.

\begin{remark}[Why the contract matters]
The contract shows that the proposal is not only interpretive. It induces an auditable runtime law. A platform claiming to implement the RoSO--SMGI bridge should be able to point to explicit semantic lift, admissible candidate generation, certificate screening, memory-aware ranking, and certified deployment stages.
\end{remark}
\section{Limitations and non-claims}
The adequacy claim of this article is substantial but not unlimited.

First, we do not claim that RoSO itself proves dynamic correctness. The standard provides the semantic scaffolding needed to formulate dynamic correctness, but not the full admissibility theory.

Second, we do not claim that every robotics platform using RoSO automatically satisfies SMGI obligations. The obligations must be instantiated with concrete certificates, monitors, and transition laws. Our theorems are conditional theorems of adequacy, not automatic correctness results for all possible implementations.

Third, we do not claim that SMGI replaces low-level control theory or domain-specific safety assurance. In healthcare, mobility, retail, or collaborative robotics, local safety constraints, verification procedures, and regulation remain indispensable. SMGI adds a cross-regime structural governance layer; it does not subsume every lower-level assurance technique.

Fourth, the present article reasons primarily at the kernel level of SMGI. Richer memory stratification, empirical topology-shift benchmarks, and more detailed evaluator-update theory would strengthen future applied developments.

\section{Discussion}
The core result of the paper can now be stated more sharply. RoSO makes the semantic and normative commitments of a robotic service explicit; SMGI supplies the structural and dynamic conditions under which those commitments may survive change. The bridge matters for four specific reasons.

\paragraph{RoSO becomes operationally actionable.}
A semantic standard has maximal impact when it guides not only data description but also system construction and runtime governance. The RoSO--SMGI bridge turns ontology use into a path toward adaptive service orchestration.

\paragraph{Service adaptation becomes falsifiable.}
Without a structural framework, claims that a reconfiguration is valid remain informal. Under SMGI, they reduce to explicit obligations: closure, stability, bounded capacity, evaluative invariance, and identity preservation. Semantic change becomes something that can be certified, rejected, or monitored rather than merely narrated.

\paragraph{Semantics is upgraded into assurance.}
Many systems can annotate components semantically. Far fewer can justify adaptive reconfiguration under semantic, service, and safety constraints. The RoSO--SMGI combination upgrades ontology use from annotation to assurance because it couples typed structure with an induced behavioral semantics and a governance discipline for norm-respecting change. In that precise sense, norms specify what must hold, while SMGI governs how change may occur without violating what must hold.

\paragraph{Composition becomes certifiable.}
Because robotic services are often assembled from subservices, the compositional theorem is not a minor technicality. It is the formal bridge from ontology-grounded local service validity to higher-order service assembly: it states when locally acceptable substitutions remain globally admissible at the service level.

Public RoSO standardization materials reinforce the point. Cross-ontology references had to be synchronized with evolving standards, relations to external ontologies had to be made explicit, component parameters had to be semantically attached rather than left as bare classes, examples had to be reformulated for interpretability, and even core concepts had to be redrawn to preserve clean category boundaries \citep{rosoIssues10,rosoIssues11}. The framework proposed here therefore does more than add another formal vocabulary. It explains why such tensions recur and why some of them are better understood as problems of transport, commensurability, controlled revision, and memory-aware reconfiguration rather than as isolated specification edits. The RoSO treatment of \enquote{Avatar} and \enquote{AvatarRobot} is emblematic: what looks superficially like a naming issue is in fact a repair of conceptual boundary conditions \citep{rosoIssues10,roso10beta2}.

It is useful to distinguish four levels of outcome. Some tensions are \emph{clarified}: ontology consistency is shown to be weaker than admissible runtime change. Some are \emph{structurally reframed}: cross-ontology links, parameter attachment, and action/event relations are reinterpreted as transport and interface-governance problems. Some are \emph{partially resolved}: service-identity preservation and higher-order service composition receive explicit formal criteria under stated assumptions. Others remain \emph{open}: the practical synthesis of certificates, monitors, and regime-sensitive evaluators for large service ecosystems is still an implementation problem.

\section{Conclusion}
RoSO provides what service robotics urgently needs at the semantic level: formal vocabularies for robotic services, interaction, function, and environment-grounded deployment constraints \citep{rosoPortal,roso11beta1,rosoAbout11}. It also makes clear that service robotics increasingly operates under dynamic configuration and dynamic allocation of heterogeneous components. What RoSO does not itself provide is a general law of valid structural change.

This article has argued that SMGI is an adequate formal answer to that missing layer. We showed that a specification-level RoSO-conformant service application induces a well-typed SMGI meta-model; that ontology-grounded service realizations can be embedded as hypotheses in a typed service-graph space; and that the four SMGI governance obligations specialize naturally to robotic-service reconfiguration. On that basis we derived a dynamic adequacy theorem, a strict extension over ontology-only conformance, an identity-preservation result, and a compositional admissibility theorem.

The resulting picture is simple. RoSO defines protected semantic commitments. SMGI governs the admissible coupled dynamics through which those commitments may survive rebinding, recomposition, revision, and redeployment. The framework therefore does not replace the standard; it explains what respecting the standard requires once change becomes unavoidable. The next step is implementation: an ontology-grounded service orchestrator with explicit SMGI certifiers for reconfiguration, regime switching, and memory-aware reuse. Such a system would provide a direct empirical test of the adequacy thesis advanced here.

\section*{Acknowledgments}
Gratitude is due to the broader OMG RoSO standardization community whose collective work, from the 2019 request-for-proposals stage through the 2025 beta revisions, sharpened the semantic and structural questions studied in this article \citep{rosoRfp2019,roso10beta2,rosoAbout11}. Public specification materials identify submitting, contributing, and supporting organizations across this trajectory, including Japan Robot Association (JARA), Knowledge Applications and Research (KAR), the National Institute of Advanced Industrial Science and Technology (AIST), Shibaura Institute of Technology, Universit\'e Sorbonne Paris Nord, ATR, ETRI, the University of Seoul, UPEC, NTT, and the Object Management Group itself \citep{roso10beta2,rosoAbout11}. The present article also benefited from direct exposure to this standardization effort through the LIPN / Universit\'e Sorbonne Paris Nord side, which helped sharpen the practical sensitivity of the paper to interoperability, ontology revision, and service-level semantic transport, while leaving the scientific claims to stand on their own.

\bibliographystyle{plainnat}
\bibliography{SMGI_RoSO_article_v14}

@misc{osmani2026smgi,
  author = {Aomar Osmani},
  title = {A New Formal Theory of AGI Part I: The Structural Admissibility Kernel (SMGI)},
  year = {2026},
  note = {arXiv preprint}
}

@misc{rosoRfp2019,
  author = {{Object Management Group}},
  title = {Robotic Service Ontology Request for Proposal},
  year = {2019},
  howpublished = {OMG document robotics/2019-06-01},
  url = {https://www.omg.org/cgi-bin/doc?robotics/2019-06-01},
  note = {Accessed 2026-03-31}
}

@misc{rosoPortal,
author = {Suyoung Chi and Abdelghani Chibani and Koji Kamei and Elisa Kendall and Aomar Osmani and Takashi Yoshimi},
 title = {Robotic Service Ontology (RoSO), version 1.0 formal, Robotics Domain Task Force (DTF)},
  year = {2026},
  howpublished = {Official OMG robotics portal},
  url = {https://www.omg.org/spec/ROSO/1.0/About-ROSO},
  note = {Accessed 2026-03-31}
}

@misc{rosoAbout11,
  author = {Suyoung Chi and Abdelghani Chibani and Koji Kamei and Elisa Kendall and Aomar Osmani and Takashi Yoshimi},
  title = {About the Robotic Service Ontology Specification Version 1.1 beta},
  year = {2025},
  howpublished = {Official OMG specification page},
  url = {https://www.omg.org/spec/ROSO/1.1/Beta1/PDF},
  note = {Accessed 2026-03-31}
}

@misc{roso11beta1,
  author = {Suyoung Chi and Abdelghani Chibani and Koji Kamei and Elisa Kendall and Aomar Osmani and Takashi Yoshimi},
  title = {Robotic Service Ontology (RoSO) Version 1.1 Beta 1},
  year = {2025},
  howpublished = {Official OMG specification},
  url = {https://www.omg.org/spec/ROSO/1.1/Beta1/PDF},
  note = {Accessed 2026-03-31}
}

@misc{roso10beta2,
  author = {Suyoung Chi and Abdelghani Chibani and Koji Kamei and Elisa Kendall and Aomar Osmani and Takashi Yoshimi},
  title = {Robotic Service Ontology (RoSO) Version 1.0 Beta 2},
  year = {2025},
  howpublished = {Official OMG specification},
  url = {https://www.omg.org/spec/ROSO/1.0/Beta2/PDF},
  note = {Accessed 2026-03-31}
}

@misc{roso10beta1,
  author = {Suyoung Chi and Abdelghani Chibani and Koji Kamei and Elisa Kendall and Aomar Osmani and Takashi Yoshimi},
  title = {Robotic Service Ontology (RoSO) Version 1.0 Beta 1},
  year = {2023},
  howpublished = {Official OMG specification},
  url = {https://www.omg.org/spec/ROSO/1.0/Beta1/PDF},
  note = {Accessed 2026-03-31}
}

@misc{rosoIssues10,
  author = {Suyoung Chi and Abdelghani Chibani and Koji Kamei and Elisa Kendall and Aomar Osmani and Takashi Yoshimi},
  title = {Issues associated with the Robotic Service Ontology Specification Version 1.0 beta},
  year = {2023},
  howpublished = {Official OMG issue listing},
  url = {https://www.omg.org/spec/ROSO/1.0/Beta1/About-ROSO},
  note = {Accessed 2026-03-31}
}

@misc{rosoIssues11,
  author = {Suyoung Chi and Abdelghani Chibani and Koji Kamei and Elisa Kendall and Aomar Osmani and Takashi Yoshimi},
  title = {Issues associated with the Robotic Service Ontology Specification Version 1.1 beta},
  year = {2025},
  howpublished = {Official OMG issue listing},
  url = {https://www.omg.org/spec/ROSO/1.1/Beta1/About-ROSO},
  note = {Accessed 2026-03-31}
}

@misc{rois20beta2,
  author = {Suyoung Chi and Abdelghani Chibani and Koji Kamei and Elisa Kendall and Aomar Osmani and Takashi Yoshimi},
  title = {About the Robotic Interaction Service Framework Specification Version 2.0 beta 2},
  year = {2025},
  howpublished = {Official OMG specification page},
  url = {https://www.omg.org/spec/RoIS/2.0/Beta2/About-RoIS},
  note = {Accessed 2026-03-31}
}

@inproceedings{roisFramework2011,
  author = {Kamei, Koji and Nishio, Shuichi and Hagita, Norihiro and Sato, Miki},
  title = {RoIS Framework: Standardization of API for Service Robot Systems},
  booktitle = {Proceedings of the 44th Hawaii International Conference on System Sciences (HICSS)},
  year = {2011},
  pages = {1--8},
  doi = {10.1109/HICSS.2011.280}
}

@article{schlenoff2012ieee,
  author = {Schlenoff, Craig and Balakirsky, Stephen and Kramer, Thomas and Prestes, Edson and Goncalves, Paulo and Li, Howard},
  title = {An {IEEE} Standard Ontology for Robotics and Automation},
  journal = {IEEE Intelligent Systems},
  year = {2012},
  volume = {27},
  number = {2},
  pages = {99--101},
  doi = {10.1109/MIS.2012.20}
}

@misc{ieee18721_2024,
  author = {{IEEE}},
  title = {{IEEE} 1872.1-2024: {IEEE} Standard for Robot Task Representation},
  year = {2024},
  howpublished = {IEEE Standards Association standard page},
  url = {https://standards.ieee.org/ieee/1872.1/6993},
  note = {Accessed 2026-03-31}
}

@misc{aur2021,
  author = {{IEEE}},
  title = {{IEEE} 1872.2-2021: {IEEE} Standard for Autonomous Robotics (AuR) Ontology},
  year = {2021},
  howpublished = {IEEE Standards Association standard page},
  url = {https://standards.ieee.org/ieee/1872.2/7094},
  note = {Accessed 2026-03-31}
}

@misc{p18723,
  author = {{IEEE}},
  title = {{IEEE} P1872.3: Standard for Ontology Reasoning on Multiple Robots},
  year = {2022},
  howpublished = {IEEE Standards Association project page},
  url = {https://standards.ieee.org/ieee/1872.3/11037},
  note = {Accessed 2026-03-31}
}

@misc{iso8373_2021,
  author = {{International Organization for Standardization}},
  title = {ISO 8373:2021 Robotics --- Vocabulary},
  year = {2021},
  howpublished = {ISO standard page},
  url = {https://www.iso.org/standard/75539.html},
  note = {Accessed 2026-03-31}
}

@misc{iso22166_201_2024,
  author = {{International Organization for Standardization}},
  title = {ISO 22166-201:2024 Robotics --- Modularity for service robots --- Part 201: Common information model for modules},
  year = {2024},
  howpublished = {ISO standard page},
  url = {https://www.iso.org/standard/82334.html},
  note = {Accessed 2026-03-31}
}

@misc{ituY3533,
  author = {{ITU-T}},
  title = {Recommendation ITU-T Y.3533: Cloud computing --- Functional requirements for robotics as a service},
  year = {2023},
  howpublished = {ITU recommendation database},
  url = {https://www.itu.int/ITU-T/recommendations/rec.aspx?rec=15746},
  note = {Approved 2023-12-14; accessed 2026-03-31}
}

@article{prestes2013service,
  author = {Prestes, Edson and Carbonera, J. Luiz and Fiorini, Sandro and Jorge, Vanderlei A. M. and Abel, Mara and Madhavan, Raj and Locoro, Angelo and Goncalves, Paulo and Barreto, Maur{\'\i}cio E. and Habib, Maki and Chibani, Amine and Gerard, S{\'e}bastien and Amirat, Yacine and Schlenoff, Craig},
  title = {Applied Ontologies and Standards for Service Robots},
  journal = {Robotics and Autonomous Systems},
  year = {2013},
  volume = {61},
  number = {11},
  pages = {1215--1227},
  doi = {10.1016/j.robot.2013.04.008}
}

@article{tenorth2013knowrob,
  author = {Tenorth, Moritz and Beetz, Michael},
  title = {KnowRob --- A Knowledge Processing Infrastructure for Cognition-Enabled Robots},
  journal = {The International Journal of Robotics Research},
  year = {2013},
  volume = {32},
  number = {5},
  pages = {566--590},
  doi = {10.1177/0278364913481635}
}

@misc{knowrobOntologies2025,
  author = {{KnowRob Project}},
  title = {KnowRob Ontologies and {SOMA}},
  year = {2025},
  howpublished = {KnowRob project documentation},
  url = {https://www.knowrob.org/ontologies},
  note = {Accessed 2026-03-31}
}

@inproceedings{lemaignan2010oro,
  author = {Lemaignan, S{\'e}verin and Ros, Raquel and M{\"o}senlechner, Lorenz and Alami, Rachid and Beetz, Michael},
  title = {{ORO}, a Knowledge Management Platform for Cognitive Architectures in Robotics},
  booktitle = {2010 IEEE/RSJ International Conference on Intelligent Robots and Systems (IROS)},
  year = {2010},
  pages = {3548--3553},
  doi = {10.1109/IROS.2010.5649547}
}

@article{zager2024semanticinterop,
  author = {Zager, Marvin and Sieber, Christoph and Fay, Alexander},
  title = {Towards Semantic Interoperability: An Information Model for Autonomous Mobile Robots},
  journal = {Journal of Intelligent \& Robotic Systems},
  year = {2024},
  volume = {110},
  pages = {123},
  doi = {10.1007/s10846-024-02159-3}
}

@article{cui2021semantic,
  author = {Cui, Guowei and Shuai, Wei and Chen, Xiaoping},
  title = {Semantic Task Planning for Service Robots in Open Worlds},
  journal = {Future Internet},
  year = {2021},
  volume = {13},
  number = {2},
  pages = {49},
  doi = {10.3390/fi13020049}
}

@article{ge2024ontology,
  author = {Ge, Yueguang and Zhang, Shaolin and Cai, Yinghao and Lu, Tao and Wang, Haitao and Hui, Xiaolong and Wang, Shuo},
  title = {Ontology Based Autonomous Robot Task Processing Framework},
  journal = {Frontiers in Neurorobotics},
  year = {2024},
  volume = {18},
  pages = {1401075},
  doi = {10.3389/fnbot.2024.1401075}
}

@article{sun2019smart,
  author = {Sun, Xiaolei and Zhang, Yu and Chen, Jing},
  title = {High-Level Smart Decision Making of a Robot Based on Ontology in a Search and Rescue Scenario},
  journal = {Future Internet},
  year = {2019},
  volume = {11},
  number = {11},
  pages = {230},
  doi = {10.3390/fi11110230}
}

@misc{iso18646_2_2024,
  author = {{International Organization for Standardization}},
  title = {ISO 18646-2:2024 Robotics --- Performance criteria and related test methods for service robots --- Part 2: Navigation},
  year = {2024},
  howpublished = {ISO standard page},
  url = {https://www.iso.org/standard/82643.html},
  note = {Accessed 2026-03-31}
}

@article{aguado2024dependable,
  author = {Aguado, Esther and Gomez, Virgilio and Hernando, Miguel and Rossi, Claudio and Sanz, Ricardo},
  title = {A Survey of Ontology-Enabled Processes for Dependable Robot Autonomy},
  journal = {Frontiers in Robotics and AI},
  year = {2024},
  volume = {11},
  pages = {1377897},
  doi = {10.3389/frobt.2024.1377897}
}

@article{vitucci2019grasping,
  author = {Vitucci, Nicola and Gini, Giuseppina},
  title = {Reasoning on Objects and Grasping Using Description Logics},
  journal = {Advanced Robotics},
  year = {2019},
  volume = {33},
  number = {13},
  pages = {616--635},
  doi = {10.1080/01691864.2019.1638452}
}

@article{ukai2009rtontology,
  author = {Ukai, Ken and Ando, Yoshinobu and Mizukawa, Makoto},
  title = {Robot Technology Ontology Targeting Robot Technology Services in Kukanchi --- Interactive Human-Space Design and Intelligence},
  journal = {Journal of Robotics and Mechatronics},
  year = {2009},
  volume = {21},
  number = {4},
  pages = {489--497},
  doi = {10.20965/jrm.2009.p0489}
}

@article{ngo2011rtontology,
  author = {Ngo, Trung Lam and Lee, Haeyeon and Mizukawa, Makoto},
  title = {Automatic Building Robot Technology Ontology Based on Basic-Level Knowledge},
  journal = {Journal of Robotics and Mechatronics},
  year = {2011},
  volume = {23},
  number = {4},
  pages = {515--522},
  doi = {10.20965/jrm.2011.p0515}
}

@article{han2021semanticmaps,
  author = {Han, Xiaoning and Li, Shuailong and Wang, Xiaohui and Zhou, Weijia},
  title = {Semantic Mapping for Mobile Robots in Indoor Scenes: A Survey},
  journal = {Information},
  year = {2021},
  volume = {12},
  number = {2},
  pages = {92},
  doi = {10.3390/info12020092}
}

@article{miao2023kgmanip,
  author = {Miao, Runqing and Wang, Yong and Wang, Chen and Zhang, Zexu and Li, Jinyang and Li, Yue},
  title = {Semantic Representation of Robot Manipulation with Knowledge Graph},
  journal = {Entropy},
  year = {2023},
  volume = {25},
  number = {4},
  pages = {657},
  doi = {10.3390/e25040657}
}

@article{kamei2021serviceology,
  author = {Kamei, Koji and Miyashita, Takahiro},
  title = {Standardization of Ontology for Service Robots},
  journal = {Serviceology},
  year = {2021},
  volume = {7},
  number = {4},
  pages = {138--141},
  doi = {10.24464/serviceology.7.4_138}
}

@article{carlucho2022semanticreasoning,
  author = {Carlucho, Ignacio and De Paula, Matias and Acosta, Guillermo Gabriel and Simoes, Evandro and Ghallab, Malik},
  title = {A survey of Semantic Reasoning frameworks for robotic systems},
  journal = {Robotics and Autonomous Systems},
  year = {2022},
  volume = {154},
  pages = {104294},
  doi = {10.1016/j.robot.2022.104294}
}

@article{manzoor2021survey,
  author = {Manzoor, Sumaira and Rocha, Yuri Goncalves and Joo, Sung-Hyeon and Bae, Sang-Hyeon and Kim, Eun-Jin and Joo, Kyeong-Jin and Kuc, Tae-Yong},
  title = {Ontology-Based Knowledge Representation in Robotic Systems: A Survey Oriented toward Applications},
  journal = {Applied Sciences},
  year = {2021},
  volume = {11},
  number = {10},
  pages = {4324},
  doi = {10.3390/app11104324}
}

\end{document}